\documentclass{article}

\usepackage[preprint]{neurips_2025}

\usepackage{mathrsfs}
\usepackage{amsmath}

\usepackage[toc]{appendix}
\usepackage[utf8]{inputenc} % allow utf-8 input
\usepackage[T1]{fontenc}    % use 8-bit T1 fonts
\usepackage{hyperref}       % hyperlinks
\usepackage{url}            % simple URL typesetting
\usepackage{booktabs}       % professional-quality tables
\usepackage{amsfonts}       % blackboard math symbols
\usepackage{amssymb}
\usepackage{nicefrac}       % compact symbols for 1/2, etc.
\usepackage{microtype}      % microtypography
\usepackage{xcolor}         % colors
\usepackage{listings}
\usepackage{matlab-prettifier}
\usepackage[capitalize,nameinlink]{cleveref}
\usepackage{microtype}      % microtypography
\usepackage{bm}
\usepackage{multirow}
\usepackage{ulem}
\usepackage{bbm}
\usepackage{graphicx} % 导言区
\usepackage{subfigure}
\usepackage{tabularx}
\usepackage{booktabs}
\usepackage{wrapfig}
\usepackage{amsthm}
\usepackage{algorithmic}
\usepackage[ruled,linesnumbered]{algorithm2e}

\definecolor{pos}{HTML}{5727b0}
\definecolor{neg}{HTML}{b02780}
\definecolor{pl}{HTML}{b0273c}
\definecolor{product_dt}{HTML}{1F77B4}
\definecolor{euclidean_dt}{HTML}{FF7F0E}
\definecolor{orthogonal_dt}{HTML}{2CA02C}
\definecolor{knn}{HTML}{D62728}
\definecolor{perceptron}{HTML}{9467BD}
\definecolor{darkgreen}{HTML}{008000}
\definecolor{mlp}{HTML}{9467BD}
\definecolor{kgcn}{HTML}{8C564B}

\usepackage{graphicx}

\theoremstyle{plain}
\newtheorem{theorem}{Theorem}[section]
\newtheorem{proposition}[theorem]{Proposition}
\newtheorem{lemma}[theorem]{Lemma}
\newtheorem{corollary}[theorem]{Corollary}
\theoremstyle{definition}

\newtheorem{assumption}[theorem]{Assumption}
\theoremstyle{remark}

\usepackage{caption}
\usepackage{pifont}   % 提供勾号和叉号符号
\usepackage{xcolor}   % 提供颜色支持
\newcommand{\cmark}{\textcolor{green!60!black}{\ding{51}}}
\newcommand{\xmark}{\textcolor{red!70!black}{\ding{55}}}

\title{Nora: Normalized Orthogonal Row Alignment
\\for Scalable Matrix Optimizer}

\author{%
	$\text{Jinghui Yuan}^{1}$\thanks{These authors contributed equally.}, \hspace{0.2cm} $\text{Jiaxuan Zou}^{2,4}$\footnotemark[1], \hspace{0.2cm} $\text{Shuo Wang}^{3}$\footnotemark[1], \hspace{0.2cm} $\text{Yong Liu}^{4}$\thanks{Corresponding Author.}, \hspace{0.2cm} $\text{Feiping Nie}^{1}$ \\
	1. School of Artificial Intelligence, Optics and Electronics (iOPEN), \\\ Northwestern Polytechnical University.\\ 
    2. School of Mathematics and Statistics, Xi’an Jiaotong University.\\
    3. Institute for Interdisciplinary Information Sciences, Tsinghua University.\\
    4. Gaoling School of Artificial Intelligence, Renmin University of China.\\
	\texttt{yuanjh@mail.nwpu.edu.cn},
	\texttt{jiaxuanzou@stu.xjtu.edu.cn},
    \texttt{runner21st@gmail.com},\\
    \texttt{liuyonggsai@ruc.edu.cn}, \texttt{feipingnie@gmail.com}
}

\begin{document}

\maketitle
\begin{abstract}
Matrix-based optimizers have demonstrated immense potential in training Large Language Models (LLMs), however, designing an ideal optimizer remains a formidable challenge. A superior optimizer must satisfy three core desiderata: efficiency, achieving Muon-like preconditioning to accelerate optimization; stability, strictly adhering to the scale-invariance inherent in neural networks; and speed, minimizing computational overhead. While existing methods address these aspects to varying degrees, they often fail to unify them, either incurring prohibitive computational costs like Muon, or allowing radial jitters that compromise stability like RMNP. To bridge this gap, we propose Nora, an optimizer that rigorously satisfies all three requirements. Nora achieves training stability by explicitly stabilizing weight norms and angular velocities through row-wise momentum projection onto the row orthogonal complement of the weights. Simultaneously, by leveraging the row block diagonal dominance of the Transformer Hessian, Nora effectively approximates structured preconditioning while maintaining an optimal computational complexity of $\mathcal{O}(mn)$. Furthermore, we prove that Nora is a scalable optimizer and establish its corresponding scaling theorems. With a streamlined implementation requiring only two lines of code, our preliminary experiments validate Nora as an efficient and highly promising optimizer for large-scale training.
\end{abstract}
\section{Introduction}
The training of Large Language Models (LLMs) relies heavily on adaptive optimizers such as Adam \cite{adam,deepseek}. However, such methods treat parameters as flat vectors, thereby overlooking the rich structural interactions inherent in the loss landscape \cite{kimi}. Recently, matrix-based optimizers like Muon have emerged to bridge this gap, achieving state-of-the-art data efficiency through orthogonalized updates \cite{muon}. Despite its remarkable performance, Muon relies on the Newton-Schulz iteration \cite{NSI}, which introduces significant computational overhead of $\mathcal{O}(m^2n)$, where the weight matrix $w \in \mathbb{R}^{m \times n}$. Without loss of generality, we assume $m \le n$.

Subsequent research has sought to approximate this preprocessor. Dong et al.\cite{kdj} identified the row block diagonal dominance of the Transformer Hessian, a phenomenon also noted by \cite{rmnp}, who proposed a simplified preprocessor approximating Muon via diagonal matrices. This led to the development of RMNP (Row-wise Momentum Normalization). Although RMNP is computationally superior, it overlooks a important property of neural networks: weight space symmetry \cite{dcx}. Due to the pervasive use of  BatchNorm \cite{batchnorm}, RMSNorm \cite{rmsnorm} and LayerNorm \cite{layernorm}, network representations exhibit scale invariance \cite{sphericalsj}. Radial updates—those aligning with the weight vector—do not alter the functional output \cite{spherical}. However, radial momentum noise can interfere with the preprocessor's output and the direction of weight learning. Simultaneously, it silently perturbs the weight norm, leading to chaotic oscillations in the effective learning rate and ultimately undermining network training \cite{salimans2016weight}.

Beyond these, other classical or state-of-the-art optimizers typically suffer from one or more deficiencies: they either neglect the use of preconditioners \cite{adafactor,adamp,adamo}, fail to account for the scale-invariance of neural networks \cite{adan,adabelief}, or incur excessive computational complexity \cite{sophia,shampoo}. In fact, many existing methods simultaneously lack several of these essential properties.

Our motivation stems from a commitment to the three core principles of optimizer design: efficiency, stability, and speed. To this end, we introduce Nora. The design of Nora relies on a simple row-wise orthogonality property: row-wise normalization preserves row-wise perpendicularity to the weight. This property enables us to unify preconditioning estimation, scale-invariance, and rapid computation. Specifically, we first project the momentum $v_t$ onto its component that is row-wise perpendicular to $w_t$ to obtain $v_t^{r\perp}$, which ensures training stability through orthogonality. Subsequently, we apply a diagonal preconditioning estimation to $v_t^{r\perp}$ to derive the update $d_t$. This step efficiently approximates the Muon-like preconditioning for enhanced efficiency, yet computationally simplifies to a mere row-wise normalization, thereby simultaneously satisfying both the speed and stability requirements.

Furthermore, leveraging the Maximal Update Parametrization ($\mu$P) framework \cite{mup}, we demonstrate that Nora is a scalable optimizer and derive its optimal learning rate scaling laws. We also provide a rigorous convergence analysis, establishing theoretical guarantees for Nora in non-convex optimization landscapes. Remarkably, the core logic of Nora can be implemented in just two lines of matrix-based code, ensuring its seamless compatibility as a plug-and-play module for any existing optimizer. We evaluate Nora by training LLaMA models \cite{llama} of various scales across a range of learning rates. Our experimental results demonstrate that Nora achieves superior performance in both convergence speed and wall-clock execution time. Our contributions are three-fold:
\begin{itemize}
    \item We propose Nora, a novel optimizer that simultaneously achieves efficiency in preconditioning estimation, stability by respecting the scale invariance of neural networks, and high computational speed during training.
    \item We evaluated Nora by training LLaMA models across various learning rates. Extensive experiments demonstrate that our algorithm achieves better results in terms of both training efficiency and computational wall-clock time.
    \item We rigorously prove through theorems that Nora is a scalable optimizer and provide scaling criteria. We also provide rigorous convergence guarantees for Nora in non-convex optimization, demonstrating that our approach is characterized by theoretical completeness.
\end{itemize}
\section{Notations}
Throughout this paper, $w \in \mathbb{R}^{m \times n}$ denotes the learnable matrix parameters, and $f(w)$ denotes the loss function. Let $w_t$, $g_t$, and $v_t$ denote the parameters, gradients, and momentum at iteration $t$, respectively. The operator $\text{diag}(\cdot)$ keeps the diagonal entries of a square matrix and sets all off-diagonal entries to zero. We denote the $(i,j)$-th entry of the momentum matrix by $v_{t,ij}$ and its $i$-th row by $v_{t,i:}$. We decompose momentum into $v^{\parallel}$ and $v^{\perp}$, the components parallel and orthogonal to $w$. Similarly, $v^{r\parallel}$ and $v^{r\perp}$ denote row-wise parallel and orthogonal components, while $v^{c\parallel}$ and $v^{c\perp}$ denote the corresponding column-wise components. The symbol $\otimes$ denotes the Kronecker product. The notation $(\cdot)^T$ denotes matrix transpose, and $H^{-1}[\cdot]$ denotes the action of a preconditioning operator on a vector or matrix.
\section{Related Work}
\subsection{Matrix-Based Optimizers and Muon}
%Adaptive optimizers such as AdamW \cite{adamw} scale gradients with diagonal preconditioners and therefore ignore off-diagonal curvature in the loss landscape $f(w)$. 
Adaptive optimizers such as AdamW \cite{adamw} treat parameters as vectors and scale gradients component-wise, thereby overlooking the intricate parameter interactions within the loss landscape $f(w)$. Matrix-based optimizers instead treat the weight $w \in \mathbb{R}^{m \times n}$, typically with $m\le n$, as a structured matrix \cite{wen2025}. Let $g_t = \nabla f(w_t)$, and let $v_t = \beta v_{t-1} + (1-\beta) g_t$ denote the exponential moving average of the gradient. Muon orthogonalizes this momentum and uses the update direction $d_t^{\mathrm{M}} = (v_t v_t^T)^{-\frac{1}{2}}v_t$, thereby applying a global structural preconditioner. To avoid the exact inverse square root, Muon uses the Newton--Schulz iteration $X_{k+1} = \frac{1}{2} X_k (3I - X_k^T X_k)$ from a scaled initialization $X_0 \propto v_t$. This update improves the conditioning of the search direction, but the dense matrix multiplications in the Newton--Schulz loop cost $\mathcal{O}(m^2n)$ \cite{fom}, limiting scalability in large LLM layers.
\subsection{Row-Momentum Normalized Preconditioning}
Recent studies show that the layer-wise Hessian matrices of Transformers exhibit strong row-wise block diagonal dominance \cite{why}. Muon can be written as applying the full-spectral preconditioner $H_{M} = (v_t v_t^T)^{\frac{1}{2}} \otimes I_n$, where $v_t$ is the momentum matrix. Row-Momentum Normalized Preconditioning (RMNP) uses the row block dominant structural prior by retaining only the diagonal blocks of the Gram matrix: $H_{R} = (\text{diag}(v_t v_t^T))^{\frac{1}{2}} \otimes I_n$. This approximation reduces matrix orthogonalization to row-wise $\ell_2$ normalization of the momentum: $H_{R}^{-1}[v_t]_{i:} = \frac{v_{t,i:}}{\|v_{t,i:}\|_{2}}$. By eliminating the Newton--Schulz loop, RMNP reduces the per-iteration complexity from $\mathcal{O}(m^2n )$ to $\mathcal{O}(mn)$. However, this algebraic simplification neglects the scale symmetry of neural networks and allows unconstrained radial updates that inject noise into the training dynamics.
\subsection{Scale-invariance in Neural Networks}
The widespread use of BatchNorm, RMSNorm, and LayerNorm makes scale invariance a central property of modern neural networks \cite{path}. For scale-invariant parameters $w$, the loss satisfies $f(w) = f(\lambda w)$ for any $\lambda > 0$. This property implies that the gradient is orthogonal to the weights: $\langle \nabla f(w), w \rangle = 0$. Thus, the magnitude of $w$ is decoupled from the loss value, and learning changes the angular orientation of the parameters \cite{hyperball,hypersphere}. In this setting, the radial momentum component $v^{\parallel}$ interferes with angular optimization and perturbs the weight norm. This perturbation destabilizes the effective learning rate, defined as the relative angular step size, and hinders stable training \cite{lamb}. Our goal is therefore to update along directions $d_t$ satisfying $\langle d_t,w_t\rangle=0$, so that the update remains orthogonal to $w_t$.
\section{Nora Optimizer}
\subsection{Design Concept}
Modern neural networks contain many scale-invariant parameters: changing $\|w_t\|_F$ does not change the loss. Therefore, effective motion is angular motion rather than radial motion, a point sometimes overlooked by designers. \underline{\textbf{Based on the principle of stability}}, the update direction should strictly reside within the tangent space of $w_t$. This approach effectively filters out meaningless radial noise while preventing oscillations in the weight norm $\|w_t\|$. Furthermore, tangential updates allow the learning rate schedule to maintain exclusive control over the angular velocity (i.e., the effective learning rate), since the parameter norm grows at a negligible rate and the preconditioning typically yields an update magnitude determined by the matrix dimensions $\mathbb{R}^{m \times n}$.

Existing preconditioners do not satisfy this requirement. Even when applied to tangential momentum $v_t^{\perp}$, the Muon operator produces an update $d^M_t = (v_t^{\perp}(v_t^{\perp})^T)^{-\frac{1}{2}}v_t^{\perp}$ that deviates from the tangent direction, so $\langle d^M_t, w_t \rangle \neq 0$. RMNP has the same issue: its preconditioner can destroy the orthogonality of the update to $w_t$. SSO \cite{sso} preserves this geometric property, but it relies on a computationally intensive bisection method and is too slow in practice.

Inspired by prior work on the row block diagonal dominance of the Transformer Hessian, we observe that preconditioning can be made compatible with tangential updates by considering $v_t^{r\perp}$. Here, $v_t^{r\perp}$ is obtained by orthogonalizing $v_t$ against each row of $w_t$, ensuring that $\langle v^{r\perp}_{t,i:}, w_{t,i:} \rangle = 0$ for every row $i$.  \underline{\textbf{Based on the principle of efficiency}}, we seek a preconditioner for $v_t^{r\perp}$. Given the row block diagonal dominance of the Transformer Hessian, it follows that $v_t^{r\perp}(v_t^{r\perp})^T$ also exhibits row-diagonal dominance. We therefore adopt the diagonal elements of $v_t^{r\perp}(v_t^{r\perp})^T$ as the preconditioner, specifically $H_N = (\text{diag}(v_t^{r\perp} (v_t^{r\perp})^T))^{\frac{1}{2}} \otimes I_n$. Consequently, the preconditioned update simplifies to $H_N^{-1}[v_t^{r\perp}] = (\text{diag}(v_t^{r\perp} (v_t^{r\perp})^T))^{-\frac{1}{2}} v_t^{r\perp}$. Through algebraic simplification, we arrive at:
    \begin{equation}
\begin{aligned}
    H_N^{-1}[v_t^{r\perp}]_{i:} = ((\text{diag}(v_t^{r\perp} (v_t^{r\perp})^T))^{-\frac{1}{2}} v_t^{r\perp})_{i:} = \frac{v_{t, i:}^{r\perp}}{\| v_{t, i:}^{r\perp} \|_2} .
\end{aligned}
\end{equation}
In other words, applying the preconditioner $H_N[\cdot]$ to $v_t^{r\perp}$ is mathematically equivalent to performing a simple row-wise normalization. Furthermore, the following Equation~\eqref{11111} leads to our long-sought objective: by preconditioning $v_t^{r\perp}$ (which is already row-orthogonal to $w_t$), we simultaneously achieve optimal efficiency while preserving the orthogonality between $v_t^{r\perp}$ and $w_t$. This ensures that the update consistently adheres to the principle of stability.
\begin{equation}
\begin{aligned}
    &\langle H_N^{-1}[v_t^{r\perp}],w_{t}\rangle=\sum_{i=1}^{m}\langle H_N^{-1}[v_t^{r\perp}]_{i:},w_{t,i:}\rangle\\&=\sum_{i=1}^{m}\langle \frac{1}{\| v_{t, i:}^{r\perp} \|_2} v_{t, i:}^{r\perp},w_{t,i:}\rangle=\sum_{i=1}^{m}\frac{1}{\| v_{t, i:}^{r\perp} \|_2}\langle  v_{t, i:}^{r\perp},w_{t,i:}\rangle=\langle (H_N^{-1}[v_t^{r\perp}])^{\perp},w_{t}\rangle=0.
\end{aligned}
\label{11111}
\end{equation}
\underline{\textbf{From the perspective of Speed}}, Nora maintains high efficiency and stability while introducing minimal computational overhead. The additional computation consists solely of row-wise projections of $v_t$ onto the row tangent space of $w_t$, which are element-wise operations that consume virtually no additional time compared to RMNP. Furthermore, by exploiting the row block diagonal dominance of the Transformer Hessian, Nora replaces the computationally intensive Newton-Schulz (NS) iterations used in Muon with simple row-wise normalization. This effectively reduces the overall time complexity and ensures optimal training throughput.

Furthermore, updating along the direction of $(\text{diag}(v_t^{r\perp} (v_t^{r\perp})^T))^{-\frac{1}{2}} v_t^{r\perp}$ offers additional advantages. First, the discrete tangential updates ensure that the norm of each row increases only at a second-order rate. This growth is both monotonic—guaranteeing steady progression without erratic fluctuations—and gentle, remaining strictly controllable throughout training. \underline{\textbf{Moreover}}, the row-wise orthogonality of $d_t$ relative to $w_t$ ensures a uniform growth rate across all rows of $w_t$. By upholding the geometric consistency of scale-invariant parameters, this mechanism effectively mitigates internal covariate shift and maintains the representation balance of the neural network.

It is also noteworthy that, due to the row-block diagonal dominance of the Transformer Hessian, Nora consistently performs row-wise normalization on the momentum $v_t^{r\perp}$ regardless of whether $m \le n$ or $m \ge n$. This stands in stark contrast to Muon, which restricts its Newton-Schulz iterations to the square matrix of the smaller dimension. The comprehensive procedure of the Nora algorithm is detailed in Algorithm \ref{suanfa1}. Furthermore, Table \ref{yuanze} presents a comparative summary of how various optimizers adhere to the three core design principles. 
\begin{algorithm}[H]
\caption{The Nora Optimizer}
\label{suanfa1}
\begin{algorithmic}
\REQUIRE Layer Weight $w_t \in \mathbb{R}^{m \times n}$, momentum $v_t \in \mathbb{R}^{m \times n}$, learning rate $\eta_t$ at step $t$, momentum coefficient $\beta$, and weight decay coefficient $\lambda$ (Default $\lambda$ = 0).
\STATE Initialize $v_0 \gets \mathbf{0} \in \mathbb{R}^{m \times n}, t \gets 0$.
\FOR{each step}
    \STATE $g_t \gets \nabla f(w_t)$
    \STATE $v_t \gets  \beta v_{t-1} +(1-\beta) g_t$
    \STATE $v_{t,i:}^{r\perp} \gets v_{t,i:}-\frac{\langle v_{t,i:},w_{t,i:} \rangle}{\|w_{t,i:}\|^2_2}w_{t,i:}$   \hfill \COMMENT{For all Row}
    \STATE $d_t \gets (\text{diag}(v_t^{r\perp} (v_t^{r\perp})^T))^{-\frac{1}{2}} v_t^{r\perp}$       \hfill \COMMENT{Row Normalization}
    \STATE $w_{t+1} \gets w_t - \eta_t  (d_t + \lambda w_t)$ 
\ENDFOR
\end{algorithmic}
\end{algorithm}

\begin{table*}[t]
  \centering
    \caption{Comparison of Optimizers Based on the Three Design Principles}
\resizebox{0.9\linewidth}{!}
{
    \begin{tabular}{c|c|c|c|c|c|c|c}
    \toprule
Optimizer& Nora& Muon \cite{muon}&RMNP \cite{rmnp}&SSO \cite{sso} &SGD \cite{sgd} &Adam \cite{adam} &AdamP \cite{adamp} \\
                        \midrule
Efficiency&\cmark & \cmark & \cmark &\cmark & \xmark& \xmark& \xmark \\ \midrule 
Stability&\cmark & \xmark& \xmark& \cmark &\xmark& \xmark& \cmark  \\
\midrule
Speed&\cmark & \xmark& \cmark & \xmark&\cmark & \cmark & \cmark  \\
\bottomrule
\end{tabular}%
}
\label{yuanze}
\end{table*}%

\subsection{Theoretical Analysis}
We now analyze Nora as a scalable optimizer and establish non-convex convergence guarantees. For a matrix \(x\in\mathbb{R}^{m\times n}\), define:
\[
  \|x\|_{1,2}:=\sum_{i=1}^{m}\|x_{i:}\|_2,
  \qquad
  \|x\|_{\infty,2}:=\max_{1\le i\le m}\|x_{i:}\|_2 .
\]
For any matrix \(w\) with nonzero rows, a matrix \(z\) is row-wise perpendicular to \(w\) if \(\langle z_{i:},w_{i:}\rangle=0\) for every row \(i\). The row-wise perpendicular projection is:
\begin{equation}
    [\mathcal{P}^{r\perp}_{w}(x)]_{i:}
    :=
    x_{i:}
    -
    \frac{\langle x_{i:},w_{i:}\rangle}{\|w_{i:}\|_2^2}\,w_{i:}.
\label{eq:row-perpendicular-projection}
\end{equation}
We use row-wise normalization with the convention \(0/0=0\):
\begin{equation}
[\operatorname{RN}(x)]_{i:}:=
\begin{cases}
\dfrac{x_{i:}}{\|x_{i:}\|_2}, & x_{i:}\neq 0,\\[0.5em]
0, & x_{i:}=0.
\end{cases}
\label{eq:row-normalization}
\end{equation}
Throughout this subsection, the scalar \(m\) denotes the number of rows of \(w\in\mathbb{R}^{m\times n}\), and \(v_t\) denotes the momentum sequence. For \(t=0,\ldots,T-1\), we analyze the core Nora update without decoupled weight decay:
\begin{equation}
\begin{aligned}
    g_t &= \nabla f(w_t;\xi_t),
    &v_{t+1} &= \beta v_t+(1-\beta)g_t,
    &v_0 &= 0,\\
    v_{t+1}^{r\perp} &= \mathcal{P}^{r\perp}_{w_t}(v_{t+1}),
    &d_t &= \operatorname{RN}(v_{t+1}^{r\perp}),
    &w_{t+1} &= w_t-\eta d_t .
\end{aligned}
\label{eq:nora-core-theory}
\end{equation}

\subsubsection{Scaling for Nora}
In this section, we address the most fundamental engineering question in LLM training: How should the learning rate \(\eta\) of Nora scale with the model width? According to the core principles of Maximal Update Parametrization (\(\mu\)P), an optimal optimizer must ensure that the change in hidden layer activations, \(\Delta h\), remains on the scale of \(\Theta(1)\) as the network width increases (\(n \to \infty\)). This ensures the network neither fails to learn features due to vanishing updates nor suffers from numerical explosion. We derive the rigorous learning rate formula required for Nora to satisfy this limit through the following theorem:
\begin{theorem}[Nora Scaling under the Scaling Hypothesis]
    Consider a neural network layer defined by \(h = wx\), where the weights \(w \in \mathbb{R}^{m \times n}\). Assume the input activation \(x \in \mathbb{R}^n\) satisfies the scaling hypothesis under standard deep learning initialization: \(\|x\|_2 \le \gamma \sqrt{n}\), where \(\gamma = \Theta(1)\) is a constant. Suppose the parameters are updated using Nora: \(w_{t+1} = w_t - \eta_t d_t\). Then \(|\Delta h_i| \le \eta_t \gamma \sqrt{n}\). To achieve the stable feature learning limit required by \(\mu\)P theory---specifically, to ensure the activation update magnitude satisfies \(|\Delta h_i| = \Theta(1)\)---the learning rate \(\eta_t\) for Nora must follow the width-scaling rule, \(\eta_t \propto 1/\sqrt{n}\).
\end{theorem}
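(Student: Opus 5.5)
The plan is a direct row-by-row computation, using only the structure of the Nora update $d_t=\operatorname{RN}(v_{t+1}^{r\perp})$ from \eqref{eq:nora-core-theory}. Holding the input $x$ fixed, the weight change is $\Delta w=w_{t+1}-w_t=-\eta_t d_t$. Therefore the activation change in coordinate $i$ is
\[
\Delta h_i=(\Delta w\,x)_i=-\eta_t\langle d_{t,i:},x\rangle .
\]
By the definition of row normalization in \eqref{eq:row-normalization}, each row of $d_t$ is either a unit vector or zero. In either case $\|d_{t,i:}\|_2\le 1$, which is the same as $\|d_t\|_{\infty,2}\le 1$. This is the one structural fact I need, and it holds whatever the row-wise projection $\mathcal{P}^{r\perp}_{w_t}$ did beforehand.

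\textbf{Upper bound.} Cauchy--Schwarz together with the scaling hypothesis gives
\[
|\Delta h_i|\le \eta_t\|d_{t,i:}\|_2\|x\|_2\le \eta_t\gamma\sqrt n .
\]
This is the claimed inequality. It follows that any width-independent target $|\Delta h_i|=\Theta(1)$ can only be met with $\eta_t=\Omega(1/\sqrt n)$. It also follows that the choice $\eta_t=c/\sqrt n$ keeps $|\Delta h_i|\le c\gamma=O(1)$, so there is no blow-up as $n\to\infty$.

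\textbf{Lower bound (main obstacle).} Cauchy--Schwarz gives only an upper bound. To conclude that $\eta_t\propto 1/\sqrt n$ is forced, rather than merely sufficient for stability, I must show the bound is tight, meaning $|\langle d_{t,i:},x\rangle|=\Theta(\|x\|_2)$ in the feature-learning regime. My approach is the standard $\mu$P alignment heuristic. The momentum is built from gradients $\nabla_w f=\delta\,x^\top$, which are rank-one with row space spanned by $x$, so after a few steps the rows of $v_{t+1}$ are correlated with $x$.

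One complication is the row-wise projection: it removes the component of each row along $w_{t,i:}$. At initialization $w_{t,i:}$ is essentially random relative to $x$, with $|\langle w_{t,i:},x\rangle|/(\|w_{t,i:}\|\|x\|)=O(1/\sqrt n)$. So the projection removes only a vanishing fraction of the alignment, and $d_{t,i:}$ stays $\Theta(1)$-aligned with $x/\|x\|_2$.

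Under this alignment assumption, $|\Delta h_i|=\Theta(\eta_t\gamma\sqrt n)$. Combining this with the upper bound and requiring $\Theta(1)$ yields $\eta_t=\Theta(1/\sqrt n)$. I would state the alignment step explicitly as the standard $\mu$P assumption, since it is heuristic rather than a worst-case fact.
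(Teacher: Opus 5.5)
\textbf{Verdict: correct.} Your upper bound is the paper's argument: $\Delta h_i=-\eta_t\langle d_{t,i:},x\rangle$, then Cauchy--Schwarz, then $\|d_{t,i:}\|_2\le 1$ from row normalization, then $\|x\|_2\le\gamma\sqrt n$. The two routes differ only in how the ``must'' part is justified.

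The paper's proof requires the bound $\eta_t\gamma\sqrt n$ itself to be $\Theta(1)$ and reads off $\eta_t=c/\sqrt n$. This tacitly treats the Cauchy--Schwarz bound as attained, e.g.\ in the worst case over admissible $x$. You instead observe that the bound alone gives only $\eta_t=\Omega(1/\sqrt n)$, together with sufficiency of $c/\sqrt n$ for stability. You then close the other direction with an alignment argument:
\begin{itemize}
\item the gradient rows are multiples of $x$;
\item the row-wise projection removes only a component of cosine $O(1/\sqrt n)$, because $w_{t,i:}$ is nearly orthogonal to $x$.
\end{itemize}

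The paper proves exactly this separately, in a concrete probabilistic model, in its next result (Theorem 4.2, Asymptotic Convergence). That result assumes Gaussian rows $w_{i:}\sim\mathcal{N}(0,\sigma_w^2 I_n/n)$, independent centered inputs of variance $\sigma_x^2$, and a single vanilla gradient $\delta_i x^\top$ with no momentum. Under these assumptions the removed radial part has norm $O_p(1)$, against $\Theta(\sqrt n)$ for $\delta_i x$, so $\langle d_{i:},x\rangle\to\operatorname{sgn}(\delta_i)\sigma_x\sqrt n$.

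Your version therefore yields a two-sided scaling conclusion, and it covers momentum informally, at the cost of an explicitly flagged heuristic. The paper's proof of this theorem is rigorous only for the inequality and defers tightness to Theorem 4.2.

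One hypothesis should be added next to your alignment assumption. Writing $|\Delta h_i|=\Theta(\eta_t\gamma\sqrt n)$ also needs the lower bound $\|x\|_2=\Theta(\sqrt n)$. The stated hypothesis $\|x\|_2\le\gamma\sqrt n$ does not provide it; Theorem 4.2 obtains it from $\sigma_x^2=\Theta(1)$ via the law of large numbers.
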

The theorem provides an upper bound on the forward update, \(|\Delta h_i| \le \eta_t \gamma \sqrt{n}\). This bound ensures training stability but raises a more precise question: do Nora's projection and normalization operations substantially weaken the effective gradient components in practice, causing \(\Delta h_i\) to fall significantly below \(\sqrt{n}\) or even vanish? To show that Nora achieves Maximal Update Parametrization (\(\mu\)P), we must evaluate the true order of \(\Delta h_i\) in the high-dimensional limit \(n \to \infty\). To this end, we use the following theorem.
\begin{theorem}[Asymptotic Convergence]
    Consider a hidden layer \(h = wx\), where \(w \in \mathbb{R}^{m \times n}\). Under the standard infinite-width random initialization hypothesis, assume the row vectors of \(w\) follow \(w_{i:} \sim \mathcal{N}(0, \sigma_w^2 I_n/n)\), and the components of the input activation \(x \in \mathbb{R}^n\) are independent with zero mean and variance \(\sigma_x^2 = \Theta(1)\). Let the error signal backpropagated to this layer be \(\delta_i = \Theta(1)\), and let the vanilla gradient without momentum be \(g_{i:} = \delta_i x^\top\). If Nora generates the update direction \(d_{i:} = \operatorname{RN}(\mathcal{P}_{w}^{r\perp}(g_{i:}))\), then as the network width \(n \to \infty\), the inner product between the update direction and the input converges in probability to \(\langle d_{i:}, x \rangle \xrightarrow{p} \operatorname{sgn}(\delta_i) \sigma_x \sqrt{n}\). Therefore, to obtain non-trivial feature learning with \(\Delta h_i = \Theta(1)\), Nora must use \(\eta = \eta_0/\sqrt{n}\).
\end{theorem}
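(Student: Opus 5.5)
The proof is an explicit computation for one fixed row $i$. Write $u := w_{i:}$ and $p := \mathcal{P}^{r\perp}_{w}(g)_{i:}$. Since $g_{i:} = \delta_i x^\top$, we have
\[
p = \delta_i\Big(x - \tfrac{\langle x,u\rangle}{\|u\|_2^2}u\Big) = \delta_i\, x^{\perp},
\]
where $x^{\perp}$ is the component of $x$ orthogonal to $u$. Because $\delta_i \neq 0$, the scalar can be pulled out of the normalization: $d_{i:} = \operatorname{sgn}(\delta_i)\, x^{\perp}/\|x^{\perp}\|_2$. Since $x^{\perp}$ is orthogonal to $u$, $\langle x^{\perp}, x\rangle = \|x^{\perp}\|_2^2$, and therefore
\[
\langle d_{i:},x\rangle = \operatorname{sgn}(\delta_i)\,\|x^{\perp}\|_2,
\qquad
\|x^{\perp}\|_2^2 = \|x\|_2^2 - \frac{\langle x,u\rangle^2}{\|u\|_2^2}.
\]
The statement thus reduces to showing $\|x^{\perp}\|_2/\sqrt{n} \xrightarrow{p} \sigma_x$. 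I read the claimed limit $\operatorname{sgn}(\delta_i)\sigma_x\sqrt{n}$ in this normalized sense, i.e. $\langle d_{i:},x\rangle/\sqrt{n} \to \operatorname{sgn}(\delta_i)\sigma_x$, because the unnormalized quantity diverges.

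Next I control the two terms separately. For the first, the components $x_j$ are independent with mean zero and variance $\sigma_x^2$, so the weak law of large numbers gives $\|x\|_2^2/n \xrightarrow{p} \sigma_x^2$. This needs a mild uniform-integrability or bounded fourth-moment condition, which I will state explicitly if the $x_j$ are not identically distributed; Chebyshev with bounded fourth moments suffices.

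For the projection term, I take $u$ independent of $x$ (initialization) and condition on $x$. Then $\langle x,u\rangle \sim \mathcal{N}(0, \sigma_w^2\|x\|_2^2/n)$, so $\langle x,u\rangle^2 = O_p(\|x\|_2^2/n)$. Also $\|u\|_2^2/\sigma_w^2$ is $\chi^2_n/n$ distributed, so $\|u\|_2^2 \xrightarrow{p} \sigma_w^2 > 0$. Combining these,
\[
\frac{\langle x,u\rangle^2}{n\,\|u\|_2^2} = O_p\!\left(\frac{\|x\|_2^2}{n^2}\right) = O_p(1/n) \to 0.
\]
By Slutsky's theorem, $\|x^{\perp}\|_2^2/n \xrightarrow{p} \sigma_x^2$, and the continuous mapping theorem (square root) gives the claim. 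As a side remark, the projection removes only one of $n$ directions, which is why it does not weaken the update.

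The learning-rate consequence follows directly. Since $\Delta h_i = -\eta\langle d_{i:},x\rangle = -\eta\,\operatorname{sgn}(\delta_i)\,\sigma_x\sqrt{n}\,(1+o_p(1))$, we get $\Delta h_i = \Theta(1)$ if and only if $\eta = \Theta(1/\sqrt{n})$, that is $\eta = \eta_0/\sqrt{n}$. This matches the upper bound of the preceding theorem, so that bound is tight. The main obstacle is stating the probabilistic assumptions cleanly: independence of $x$ and $w$, moment conditions for the law of large numbers, and the interpretation of ``converges in probability'' to a diverging quantity as the ratio statement above. The algebra itself is routine once the $\operatorname{sgn}(\delta_i)$ factorization is observed.
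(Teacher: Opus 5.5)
Your proof is correct and follows essentially the paper's route. It uses the same three ingredients: the weak law for $\|x\|_2^2$, concentration of $\|w_{i:}\|_2^2$ around $\sigma_w^2$, and $\langle x,w_{i:}\rangle=\mathcal{O}_p(1)$. It also reads the limit the same way, as $\langle d_{i:},x\rangle=\operatorname{sgn}(\delta_i)\sigma_x\sqrt{n}+o_p(\sqrt{n})$. Your exact identity $\langle d_{i:},x\rangle=\operatorname{sgn}(\delta_i)\|x^{\perp}\|_2$ is a tidier substitute for the paper's separate expansions of numerator and denominator, where it bounds $\langle r_{i:},x\rangle$ by Cauchy--Schwarz. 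Conditioning on $x$ to get an exact Gaussian for $\langle x,w_{i:}\rangle$ sidesteps the paper's CLT step.
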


\subsubsection{Convergence Analysis}
\label{subsec:theory}
We next prove non-convex convergence guarantees for Nora. Define the natural filtration as:
\[
  \mathcal{F}_t:=\sigma(w_0,\xi_0,\ldots,\xi_{t-1}).
\]
It is the sigma-algebra generated before sampling \(\xi_t\). Equivalently, \(w_t\) and \(v_t\) are \(\mathcal{F}_t\)-measurable, and the conditional expectations below are taken only over the current stochastic gradient. We measure stationarity by the row-wise projected gradient:
\begin{equation}
\mathcal{G}_t:=\mathcal{P}^{r\perp}_{w_t}\bigl(\nabla f(w_t)\bigr).
\label{eq:nora-projected-gradient}
\end{equation}
This is the relevant first-order signal for Nora because each row of \(d_t\) is perpendicular to the corresponding row of \(w_t\). Hence its descent inner product depends only on the row-wise perpendicular component of the true gradient, not on the radial component removed by the projection. Under row-wise scale invariance, \(\mathcal{G}_t\) coincides with \(\nabla f(w_t)\), so the projected stationarity measure reduces to the standard first-order one.

Compared with RMNP, Nora inserts one row-wise perpendicular projection before row-wise normalization. This additional step preserves the RMNP proof structure; the main new ingredient is the non-expansiveness of \(\mathcal{P}^{r\perp}_{w_t}\) in the norms used below.

\begin{assumption}[Smoothness]
\label{ass:nora-smooth}
The objective \(f:\mathbb{R}^{m\times n}\to\mathbb{R}\) satisfies one of the following conditions:
\begin{itemize}
    \item[(a)] (\emph{Frobenius smoothness}) There exists \(L_F>0\) such that, for all \(w,w'\),
    \begin{equation}
    \|\nabla f(w)-\nabla f(w')\|_F\le L_F\|w-w'\|_F .
    \end{equation}
    \item[(b)] (\emph{Matched \((\infty,2)\)-smoothness}) There exists \(L_{\infty,2}>0\) such that, for all \(w,w'\),
    \begin{equation}
    \|\nabla f(w)-\nabla f(w')\|_{1,2}\le L_{\infty,2}\|w-w'\|_{\infty,2} .
    \end{equation}
\end{itemize}
\end{assumption}

% \begin{assumption}[Unbiased stochastic gradients]
% \label{ass:nora-unbiased}
% For all \(t\),
% \begin{equation}
%     \mathbb{E}[g_t\mid \mathcal{F}_t]=\nabla f(w_t).
% \end{equation}
% \end{assumption}

% \begin{assumption}[Bounded variance]
% \label{ass:nora-var}
% There exists \(\sigma>0\) such that, for all \(t\),
% \begin{equation}
% \mathbb{E}\!\left[\|g_t-\nabla f(w_t)\|_F^2\mid \mathcal{F}_t\right]\le \frac{\sigma^2}{B},
% \end{equation}
% where \(B\) is the batch size.
% \end{assumption}

% \begin{assumption}[Lower bounded objective]
% \label{ass:nora-lower}
% The objective is bounded below by \(f^\star\). We write \(\Delta:=f(w_0)-f^\star\).
% \end{assumption}

\begin{theorem}[Nora under matched \((\infty,2)\)-smoothness]
\label{thm:nora-main}
Suppose Assumptions~\ref{ass:nora-smooth}(b) hold. If Nora uses a constant step size \(\eta_t=\eta\), then:
\begin{equation}
\frac{1}{T}\sum_{t=0}^{T-1}\mathbb{E}\bigl[\|\mathcal{G}_t\|_{1,2}\bigr]
\le
\frac{\Delta}{T\eta}
+
2\left[
\left(1-\frac{1}{T}\right)\frac{L_{\infty,2}\eta\beta}{1-\beta}
+
\frac{\sqrt{m}\sigma}{\sqrt{B}}
\sqrt{\frac{1-\beta}{1+\beta}}
\right]
+
\frac{L_{\infty,2}\eta}{2}.
\label{eq:nora-main-bound}
\end{equation}
Here \(m\) is the row dimension of \(w\). With the choice:
\begin{equation}
\eta=\sqrt{\frac{(1-\beta)\Delta}{L_{\infty,2}T}},
\qquad
1-\beta=
\min\left\{
\frac{\sqrt{L_{\infty,2}\Delta}}{2\sqrt{m}\sigma\sqrt{T}},
\,1
\right\},
\label{eq:nora-main-params}
\end{equation}
Nora reaches an \(\epsilon\)-stationary point in the projected \(\|\cdot\|_{1,2}\) sense after:
\begin{equation}
    T=\mathcal{O}\!\left(mL_{\infty,2}\sigma^2\Delta\,\epsilon^{-4}\right)
\end{equation}
iterations.
\end{theorem}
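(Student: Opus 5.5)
The plan is to run the standard normalized-momentum (Cutkosky--Mehta / RMNP-style) argument in the dual pair of norms \(\|\cdot\|_{\infty,2}\) and \(\|\cdot\|_{1,2}\). The one new ingredient is that the row-wise projection is harmless in the descent inner product.

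\textbf{One-step descent.} Assumption~\ref{ass:nora-smooth}(b) yields the descent lemma
\[
f(w_{t+1})\le f(w_t)+\langle\nabla f(w_t),w_{t+1}-w_t\rangle+\tfrac{L_{\infty,2}}{2}\|w_{t+1}-w_t\|_{\infty,2}^2,
\]
obtained by integrating along the segment and using \(|\langle a,b\rangle|\le\|a\|_{1,2}\|b\|_{\infty,2}\).

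Since every row of \(d_t=\operatorname{RN}(v^{r\perp}_{t+1})\) has norm at most \(1\), we get \(\|d_t\|_{\infty,2}\le 1\), so the quadratic term is at most \(L_{\infty,2}\eta^2/2\).

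Each row of \(d_t\) is perpendicular to the corresponding row of \(w_t\). Hence \(\langle\nabla f(w_t),d_t\rangle=\langle\mathcal{G}_t,d_t\rangle\), since the radial parts drop out row by row.

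Write \(\epsilon_t:=v_{t+1}-\nabla f(w_t)\). By linearity, \(v^{r\perp}_{t+1}=\mathcal{G}_t+\mathcal{P}^{r\perp}_{w_t}(\epsilon_t)\). The elementary row inequality for normalized vectors, \(\langle a,\,(a+e)/\|a+e\|\rangle\ge\|a\|-2\|e\|\), also holds with the \(0/0=0\) convention. Combining it with non-expansiveness of the orthogonal row projection, \(\|[\mathcal{P}^{r\perp}_{w_t}(\epsilon_t)]_{i:}\|_2\le\|\epsilon_{t,i:}\|_2\), and summing over rows gives
\[
f(w_{t+1})\le f(w_t)-\eta\|\mathcal{G}_t\|_{1,2}+2\eta\|\epsilon_t\|_{1,2}+\tfrac{L_{\infty,2}\eta^2}{2}.
\]
Telescoping, taking expectations and using \(f\ge f^\star\) gives the first and last terms of \eqref{eq:nora-main-bound}. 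What remains is \(\frac{2}{T}\sum_t\mathbb{E}\|\epsilon_t\|_{1,2}\).

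\textbf{Momentum error.} Set \(\zeta_t=g_t-\nabla f(w_t)\). Unroll the recursion
\[
\epsilon_t=\beta\epsilon_{t-1}+\beta\bigl(\nabla f(w_{t-1})-\nabla f(w_t)\bigr)+(1-\beta)\zeta_t .
\]
For the drift part, Assumption~\ref{ass:nora-smooth}(b) together with \(\|w_k-w_{k-1}\|_{\infty,2}\le\eta\) bounds each difference by \(L_{\infty,2}\eta\) in \(\|\cdot\|_{1,2}\). The geometric sum then gives \(L_{\infty,2}\eta\,\beta(1-\beta^{t})/(1-\beta)\). This vanishes at \(t=0\), and averaging over \(t\) produces the factor \((1-\frac1T)\frac{L_{\infty,2}\eta\beta}{1-\beta}\).

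For the noise part \((1-\beta)\sum_k\beta^{t-k}\zeta_k\), first pass from the \((1,2)\)-norm to the Frobenius norm via Cauchy--Schwarz over rows, \(\|x\|_{1,2}\le\sqrt m\|x\|_F\). Then use Jensen and the martingale-difference orthogonality of the \(\zeta_k\) (unbiasedness with respect to \(\mathcal{F}_k\), variance \(\sigma^2/B\)). This bounds it by
\[
\sqrt m\,(1-\beta)\frac{\sigma}{\sqrt B}\Bigl(\sum_j\beta^{2j}\Bigr)^{1/2}\le\frac{\sqrt m\sigma}{\sqrt B}\sqrt{\frac{1-\beta}{1+\beta}}.
\]

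\textbf{Main obstacle: the initialization bias.} Because \(v_0=0\), the error \(\epsilon_t\) also carries a term \(-\beta^{t+1}\nabla f(w_0)\), which the stated bound does not display. I would first check whether it can be absorbed, for instance by interpreting the recursion with \(v_0=g_0\) or a warm start so that the bias vanishes. Failing that, I would show it contributes only a lower-order \(\frac{1}{T(1-\beta)}\|\nabla f(w_0)\|_{1,2}\) term that is dominated under \eqref{eq:nora-main-params}. Getting the exact constants \((1-\frac1T)\) and \(2\) to match is where the bookkeeping has to be careful.

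\textbf{Rates.} Substitute \eqref{eq:nora-main-params} into the bound. In the regime \(1-\beta<1\), the terms \(\Delta/(T\eta)\) and \(L_{\infty,2}\eta/(1-\beta)\) are each \(\mathcal O\bigl(\sqrt{L_{\infty,2}\Delta/((1-\beta)T)}\bigr)\). This is \(\mathcal O\bigl((mL_{\infty,2}\Delta\sigma^2/T)^{1/4}\bigr)\) up to the batch factor, which matches the noise term \(\sqrt m\sigma\sqrt{1-\beta}\). The term \(L_{\infty,2}\eta/2\) is of lower order. Setting the total equal to \(\epsilon\) gives \(T=\mathcal O(mL_{\infty,2}\sigma^2\Delta\epsilon^{-4})\); the regime \(1-\beta=1\) gives only the deterministic \(\epsilon^{-2}\) rate, which is dominated.
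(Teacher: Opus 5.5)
Your route is the paper's proof step for step. It uses the descent inequality in the dual pair $\|\cdot\|_{\infty,2}/\|\cdot\|_{1,2}$ with $\|d_t\|_{\infty,2}\le 1$, and the identity $\langle \nabla f(w_t),d_t\rangle=\langle\mathcal{G}_t,d_t\rangle$. It then applies the factor-$2$ normalized-vector inequality together with non-expansiveness of $\mathcal{P}^{r\perp}_{w_t}$. The paper does this step globally, via $\langle v^{r\perp}_{t+1},d_t\rangle=\|v^{r\perp}_{t+1}\|_{1,2}$ and $(1,2)$--$(\infty,2)$ duality, which is your row-wise argument summed over rows. Finally, both split the momentum error into drift and noise, using $\|\cdot\|_{1,2}\le\sqrt{m}\|\cdot\|_F$ on the noise.

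The initialization issue you flagged is genuine, and the paper does not resolve it. Its tracking lemma sets $e_0=(1-\beta)\zeta_0$, whereas with $v_0=0$ one has $e_0=v_1-\nabla f(w_0)=(1-\beta)\zeta_0-\beta\nabla f(w_0)$. So every $e_t$ carries the extra $-\beta^{t+1}\nabla f(w_0)$ you identified, and neither argument proves the displayed inequality verbatim for the algorithm as specified.

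Of your two repairs, the warm start $v_0=g_0$ does not recover the stated constant. It gives $e_0=\zeta_0$, whose conditional second moment can reach $\sigma^2/B$ rather than $\frac{\sigma^2}{B}\frac{1-\beta}{1+\beta}$. Only the non-implementable choice $v_0=\nabla f(w_0)$ makes the paper's base case correct.

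Your fallback is the right fix: the bias adds $\frac{2\beta}{T(1-\beta)}\|\nabla f(w_0)\|_{1,2}$ to the right-hand side. To keep the complexity free of $\nabla f(w_0)$, apply the same descent inequality to the step $-s\operatorname{RN}(\nabla f(w_0))$ with $s=\|\nabla f(w_0)\|_{1,2}/L_{\infty,2}$. This gives $\|\nabla f(w_0)\|_{1,2}^2\le 2L_{\infty,2}\Delta$. Under the stated parameter choice the extra term is then at most $4\sqrt{2}\sqrt{m}\sigma/\sqrt{T}$, and it is zero when $\beta=0$. At $T\asymp mL_{\infty,2}\sigma^2\Delta\epsilon^{-4}$ this is $\mathcal{O}(\epsilon^2/\sqrt{L_{\infty,2}\Delta})$, so the claimed rate survives for $\epsilon\lesssim\sqrt{L_{\infty,2}\Delta}$.

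Alternatively, $\operatorname{RN}\circ\mathcal{P}^{r\perp}_{w_t}$ is invariant under positive rescaling of its argument, so you may analyze $v_{t+1}/(1-\beta^{t+1})$ instead of $v_{t+1}$. This removes the bias entirely and does not increase the drift bound $L_{\infty,2}\eta\sum_{k=1}^{t}\beta^k$. Its only cost is an inflated noise term whose excess, summed over $t$, is $\mathcal{O}\bigl(\sqrt{m}\sigma/\sqrt{B(1-\beta)}\bigr)$, which is again lower order after dividing by $T$.
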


\begin{proposition}[Frobenius-smooth counterparts]
\label{prop:nora-frob} Under the same conditions of Theorem \ref{thm:nora-main}, if Nora uses a constant step size \(\eta_t=\eta\), then:
\begin{equation}
\frac{1}{T}\sum_{t=0}^{T-1}\mathbb{E}\bigl[\|\mathcal{G}_t\|_F\bigr]
\le
\frac{\Delta}{T\eta}
+
(\sqrt{m}+1)\left[
\left(1-\frac{1}{T}\right)\frac{L_F\eta\sqrt{m}\beta}{1-\beta}
+
\frac{\sigma}{\sqrt{B}}
\sqrt{\frac{1-\beta}{1+\beta}}
\right]
+
\frac{L_F\eta m}{2}.
\label{eq:nora-frob-bound}
\end{equation}
The corresponding projected \(\|\cdot\|_{1,2}\) bound is:
\begin{equation}
\frac{1}{T}\sum_{t=0}^{T-1}\mathbb{E}\bigl[\|\mathcal{G}_t\|_{1,2}\bigr]
\le
\frac{\Delta}{T\eta}
+
2\left[
\left(1-\frac{1}{T}\right)\frac{L_F\eta m\beta}{1-\beta}
+
\frac{\sqrt{m}\sigma}{\sqrt{B}}
\sqrt{\frac{1-\beta}{1+\beta}}
\right]
+
\frac{L_F\eta m}{2}.
\label{eq:nora-frob-l12-bound}
\end{equation}
Consequently, Nora reaches an \(\epsilon\)-stationary point in either projected measure after:
\begin{equation}
T=\mathcal{O}\!\left(m^2L_F\sigma^2\Delta\,\epsilon^{-4}\right)
\end{equation}
iterations.
\end{proposition}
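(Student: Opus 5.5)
We will follow the same one-step-descent template as in Theorem~\ref{thm:nora-main}, changing only the smoothness inequality used to control the second-order term and the norm comparisons. From Assumption~\ref{ass:nora-smooth}(a) we get the quadratic upper bound
\[
f(w_{t+1})\le f(w_t)-\eta\langle \nabla f(w_t),d_t\rangle+\frac{L_F\eta^2}{2}\|d_t\|_F^2 .
\]
Every nonzero row of $d_t=\operatorname{RN}(v_{t+1}^{r\perp})$ has unit $\ell_2$ norm, so $\|d_t\|_F^2\le m$. This produces the term $L_F\eta m/2$ in both \eqref{eq:nora-frob-bound} and \eqref{eq:nora-frob-l12-bound}.

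\textbf{First-order term.} Each row $d_{t,i:}$ is perpendicular to $w_{t,i:}$. Hence $\langle\nabla f(w_t),d_t\rangle=\langle \mathcal{G}_t,d_t\rangle$. Write $\epsilon_t:=v_{t+1}^{r\perp}-\mathcal{G}_t=\mathcal{P}^{r\perp}_{w_t}(v_{t+1}-\nabla f(w_t))$. We then apply the standard row-wise normalized-update lemma to each row: for vectors $a,b$, $\langle a,\operatorname{RN}(b)\rangle\ge\|a\|_2-2\|a-b\|_2$. Summing over rows gives
\[
\langle \mathcal{G}_t,d_t\rangle\ge\|\mathcal{G}_t\|_{1,2}-2\|\epsilon_t\|_{1,2}.
\]
The projection acts row-wise as an orthogonal projector, so it is non-expansive: $\|\epsilon_t\|_{1,2}\le\|v_{t+1}-\nabla f(w_t)\|_{1,2}\le\sqrt m\,\|v_{t+1}-\nabla f(w_t)\|_F$.

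For the $\|\cdot\|_{1,2}$ bound \eqref{eq:nora-frob-l12-bound} we telescope the descent inequality, use $\Delta=f(w_0)-f^\star$, and divide by $T\eta$.

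For the Frobenius bound \eqref{eq:nora-frob-bound} we instead use $\|\mathcal{G}_t\|_F\le\|\mathcal{G}_t\|_{1,2}$. We then handle the error through the Frobenius route: $\langle a,\operatorname{RN}(b)\rangle\ge\|a\|_2-2\|a-b\|_2$ summed gives an error $\sum_i\|\epsilon_{t,i:}\|_2\le\sqrt m\|\epsilon_t\|_F$. The constant $(\sqrt m+1)$ in the statement suggests a slightly sharper split, $\|a\|-\|a-b\|-\|a-b\|$, with one copy measured in $\|\cdot\|_{1,2}$ and one in $\|\cdot\|_F$. We will reproduce this by tracking the two error contributions separately. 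Both lead to an expression of the form $c\,\mathbb{E}\|v_{t+1}-\nabla f(w_t)\|_F$.

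\textbf{Momentum error (the main obstacle).} We unroll
\[
v_{t+1}-\nabla f(w_t)=\sum_{s\le t}(1-\beta)\beta^{t-s}\bigl(g_s-\nabla f(w_s)\bigr)+\sum_{s<t}\beta^{t-s}\bigl(\nabla f(w_s)-\nabla f(w_{s+1})\bigr)-\beta^{t+1}\nabla f(w_t),
\]
absorbing the initialization term as in the RMNP-style proof of Theorem~\ref{thm:nora-main}.

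\emph{Noise part.} The noise terms are martingale differences. By Jensen and orthogonality, their Frobenius norm has expectation at most $\frac{\sigma}{\sqrt B}(1-\beta)\sqrt{\sum\beta^{2k}}\le\frac{\sigma}{\sqrt B}\sqrt{\frac{1-\beta}{1+\beta}}$.

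\emph{Drift part.} Assumption~(a) and $\|w_{s+1}-w_s\|_F=\eta\|d_s\|_F\le\eta\sqrt m$ bound each increment by $L_F\eta\sqrt m$. The geometric sum then gives $\frac{L_F\eta\sqrt m\beta}{1-\beta}$. Averaging over $t$, the early iterations have shorter sums, which yields the factor $(1-\frac1T)$.

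Multiplying by $\sqrt m$ (respectively $(\sqrt m+1)$) gives exactly the bracketed terms: $\frac{L_F\eta m\beta}{1-\beta}$ and $\frac{\sqrt m\sigma}{\sqrt B}\sqrt{\frac{1-\beta}{1+\beta}}$ in the $\|\cdot\|_{1,2}$ bound, and the bracket with prefactor $(\sqrt m+1)$ in the Frobenius bound. We expect the main care to lie in handling the $v_0=0$ bias term consistently with Theorem~\ref{thm:nora-main} and in matching the constants precisely.

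\textbf{Complexity.} In the $\|\cdot\|_{1,2}$ bound, $L_{\infty,2}$ is replaced by $L_F m$. We therefore take the parameter choice \eqref{eq:nora-main-params} with $L_{\infty,2}\to mL_F$. Balancing the terms $\Delta/(T\eta)$, $L_F m\eta/(1-\beta)$ and $\sqrt m\sigma\sqrt{1-\beta}$ gives a rate of $\mathcal{O}\bigl((m^2L_F\sigma^2\Delta/T)^{1/4}\bigr)$ plus lower-order terms. This yields $T=\mathcal{O}(m^2L_F\sigma^2\Delta\epsilon^{-4})$ for the $\|\cdot\|_{1,2}$ measure. The Frobenius measure follows from $\|\mathcal{G}_t\|_F\le\|\mathcal{G}_t\|_{1,2}$.
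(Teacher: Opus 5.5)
Your outline is the paper's proof step for step. It uses the quadratic upper bound with $\|d_t\|_F^2\le m$, replaces $\nabla f(w_t)$ by $\mathcal{G}_t$ through row-wise perpendicularity, uses non-expansiveness of $\mathcal{P}^{r\perp}_{w_t}$ and the noise-plus-drift unrolling of $v_{t+1}-\nabla f(w_t)$, and takes the parameter choice of Theorem~\ref{thm:nora-main} with $L_{\infty,2}\to mL_F$. The gap is the step you defer, ``absorbing the initialization term as in the RMNP-style proof'': it cannot be carried out.

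First, your unrolled identity is off. With your drift coefficients $\beta^{t-s}$ and $\zeta_s:=g_s-\nabla f(w_s)$, the correct expansion for $v_0=0$ is
\[
v_{t+1}-\nabla f(w_t)=\sum_{s=0}^{t}(1-\beta)\beta^{t-s}\zeta_s+\sum_{s=0}^{t-1}\beta^{t-s}\bigl(\nabla f(w_s)-\nabla f(w_{s+1})\bigr)-\beta^{t+1}\nabla f(w_0).
\]
The last term carries $\nabla f(w_0)$, not $\nabla f(w_t)$; check $t=0$, where $v_1-\nabla f(w_0)=(1-\beta)\zeta_0-\beta\nabla f(w_0)$. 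This bias contributes $\sum_t\beta^{t+1}\|\nabla f(w_0)\|_F\le\frac{\beta}{1-\beta}\|\nabla f(w_0)\|_F$ to $\sum_t\mathbb{E}\|e_t\|_F$, and no displayed term absorbs it.

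The paper does not absorb it either. Lemma~\ref{lem:nora-track-f} simply asserts $e_0=(1-\beta)\zeta_0$, which is false for $v_0=0$, and Theorem~\ref{thm:nora-main} inherits this through Lemma~\ref{lem:nora-track-m}. The point you flagged as ``the main care'' is exactly where both arguments are incomplete.

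An honest repair keeps the term and uses $\|\nabla f(w_0)\|_F\le\sqrt{2L_F\Delta}$, which follows from $L_F$-smoothness and $f\ge f^\star$. This adds $(\sqrt m+1)\frac{\beta\sqrt{2L_F\Delta}}{T(1-\beta)}$ to the Frobenius bound and $2\sqrt m\,\frac{\beta\sqrt{2L_F\Delta}}{T(1-\beta)}$ to the $\|\cdot\|_{1,2}$ bound. Under the stated parameter choice the extra term is $\mathcal{O}(\sqrt m\,\sigma/\sqrt T)$. It only adds a lower-order $\mathcal{O}(m\sigma^2\epsilon^{-2})$ to the iteration count, so the $\mathcal{O}(m^2L_F\sigma^2\Delta\epsilon^{-4})$ complexity survives. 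However, the displayed inequalities with exactly the stated terms are not what this argument proves.

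Alternatively, $\operatorname{RN}\circ\mathcal{P}^{r\perp}_{w_t}$ is invariant under positive rescaling, so you may replace $v_{t+1}$ by the bias-corrected $v_{t+1}/(1-\beta^{t+1})$. The bias disappears, but the noise bound at step $t$ is multiplied by $\sqrt{(1+\beta^{t+1})/(1-\beta^{t+1})}$. This again gives a lower-order correction rather than the stated constant.

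A smaller point concerns the Frobenius constant. Going through $\|\mathcal{G}_t\|_F\le\|\mathcal{G}_t\|_{1,2}$ after the triangle inequality gives $2\sqrt m$, which is weaker than $\sqrt m+1$. To get $\sqrt m+1$ as in Lemma~\ref{lem:nora-ip-f}, write $\langle\mathcal{G}_t,d_t\rangle=\langle v^{r\perp}_{t+1},d_t\rangle-\langle\epsilon_t,d_t\rangle$ with $\epsilon_t=v^{r\perp}_{t+1}-\mathcal{G}_t$. Bound the second term by $\|d_t\|_F\|\epsilon_t\|_F\le\sqrt m\,\|\epsilon_t\|_F$. Then apply the norm comparison to $v^{r\perp}_{t+1}$ before the triangle inequality: $\langle v^{r\perp}_{t+1},d_t\rangle=\|v^{r\perp}_{t+1}\|_{1,2}\ge\|v^{r\perp}_{t+1}\|_F\ge\|\mathcal{G}_t\|_F-\|\epsilon_t\|_F$. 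Your ``sharper split'' is this argument, but the order of the two steps matters.
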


\begin{corollary}[Standard first-order stationarity under row-wise scale invariance]
\label{cor:nora-row-scale}
Assume, in addition, that \(f\) is row-wise scale invariant:
\begin{equation}
f(Dw)=f(w),\qquad \forall w\in\mathbb{R}^{m\times n},\ \forall D\succ 0\ \text{diagonal}.
\end{equation}
Then \(\nabla f(w_t)\) is row-wise perpendicular to \(w_t\) for all \(t\), i.e.,
\begin{equation}
\mathcal{P}^{r\perp}_{w_t}\bigl(\nabla f(w_t)\bigr)=\nabla f(w_t).
\end{equation}
Therefore, Theorem~\ref{thm:nora-main} and Proposition~\ref{prop:nora-frob} hold with \(\mathcal{G}_t\) replaced by \(\nabla f(w_t)\).
\end{corollary}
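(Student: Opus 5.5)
The plan is to show row-wise perpendicularity of the gradient directly from the invariance identity. Then the claim about the theorems follows by substitution.

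\textbf{Step 1 (differentiate the invariance along each row).} Fix a row index \(i\) and \(w\) with nonzero rows. Take the one-parameter family of diagonal matrices \(D_s=\operatorname{diag}(1,\ldots,e^{s},\ldots,1)\), with \(e^{s}\) in position \(i\). Each \(D_s\) is positive definite, and \(D_s w\) scales only row \(i\) by \(e^{s}\). By hypothesis \(\phi(s):=f(D_s w)\) is constant in \(s\), so \(\phi'(0)=0\). Differentiability of \(f\) is implicit in Assumption~\ref{ass:nora-smooth}, since both parts presuppose \(\nabla f\). The chain rule then gives
\[
0=\phi'(0)=\Bigl\langle \nabla f(w),\tfrac{d}{ds}D_s w\big|_{s=0}\Bigr\rangle=\langle [\nabla f(w)]_{i:},w_{i:}\rangle ,
\]
because \(\tfrac{d}{ds}D_s w|_{s=0}\) is the matrix whose \(i\)-th row is \(w_{i:}\) and whose other rows are zero. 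This holds for every \(i\).

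\textbf{Step 2 (the projection acts as the identity).} Substituting Step 1 into the definition \eqref{eq:row-perpendicular-projection}, every subtracted term \(\frac{\langle [\nabla f(w_t)]_{i:},w_{t,i:}\rangle}{\|w_{t,i:}\|_2^2}w_{t,i:}\) vanishes. Hence \(\mathcal{P}^{r\perp}_{w_t}(\nabla f(w_t))=\nabla f(w_t)\), that is, \(\mathcal{G}_t=\nabla f(w_t)\) in \eqref{eq:nora-projected-gradient}. We apply this at every iterate \(w_t\), which is legitimate whenever the rows of \(w_t\) are nonzero. That nonzero-row condition is already needed for Nora's update to be defined.

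\textbf{Step 3 (transfer to the theorems).} Theorem~\ref{thm:nora-main} and Proposition~\ref{prop:nora-frob} bound \(\frac{1}{T}\sum_t\mathbb{E}\|\mathcal{G}_t\|\) in the \(\|\cdot\|_{1,2}\) and \(\|\cdot\|_F\) norms. By Step 2, \(\mathcal{G}_t=\nabla f(w_t)\) pointwise for every realization, so the left-hand sides are literally unchanged. The bounds, parameter choices and iteration complexities therefore carry over verbatim.

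\textbf{Main obstacle.} The only delicate point is well-definedness. The projection requires \(w_{t,i:}\neq 0\). I would argue this is preserved along the trajectory because Nora's updates are tangential row-wise. Concretely, \(\|w_{t+1,i:}\|_2^2=\|w_{t,i:}\|_2^2+\eta^2\|d_{t,i:}\|_2^2\ge\|w_{t,i:}\|_2^2\), so nonzero rows at \(w_0\) stay nonzero for all \(t\).

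A smaller technicality is that strict scale invariance of \(f\) is generally incompatible with global \(L\)-smoothness near \(0\). However, the corollary only asks us to combine the two hypotheses as stated, so no further argument is needed.
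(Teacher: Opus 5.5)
Your proposal is correct and follows the paper's proof. The paper differentiates $f(D_i(c)w)=f(w)$ with $D_i(c)=I+(c-1)e_ie_i^\top$ at $c=1$, which is your $e^{s}$ family at $s=0$ under $c=e^{s}$, and then substitutes $\mathcal{G}_t=\nabla f(w_t)$ into Theorem~\ref{thm:nora-main} and Proposition~\ref{prop:nora-frob} exactly as you do.

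Your check that rows stay nonzero via $\|w_{t+1,i:}\|_2^2=\|w_{t,i:}\|_2^2+\eta^2\|d_{t,i:}\|_2^2$ is a well-definedness point the paper leaves implicit. Your closing caveat is if anything understated: taking $D=cI$ gives $\nabla f(w)=c\,\nabla f(cw)\to 0$ as $c\to 0$ by continuity of $\nabla f$, so the combined hypotheses force $\nabla f\equiv 0$, and the corollary, while true, is vacuous as stated.
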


\paragraph{Proof sketch.}
The proof follows the RMNP descent argument and adds one projection lemma: for fixed \(w_t\), the row-wise perpendicular projector \(\mathcal{P}^{r\perp}_{w_t}\) is non-expansive in \(\|\cdot\|_F\), \(\|\cdot\|_{1,2}\), and \(\|\cdot\|_{\infty,2}\). Because each row of \(d_t\) is perpendicular to the corresponding row of \(w_t\),
\begin{equation}
\langle \nabla f(w_t),d_t\rangle
=
\left\langle \mathcal{P}^{r\perp}_{w_t}(\nabla f(w_t)),d_t\right\rangle
=
\langle \mathcal{G}_t,d_t\rangle.
\end{equation}
Define the projected momentum tracking error as:
\[
 e_t:=v_{t+1}^{r\perp}-\mathcal{G}_t .
\]
The row-normalization identities give:
\begin{equation}
\langle \mathcal{G}_t,d_t\rangle
\ge
\|\mathcal{G}_t\|_{1,2}-2\|e_t\|_{1,2},
\qquad
\langle \mathcal{G}_t,d_t\rangle
\ge
\|\mathcal{G}_t\|_F-(\sqrt{m}+1)\|e_t\|_F.
\end{equation}
Combining these inequalities with smoothness-based descent and the standard momentum recursion yields the stated bounds. Full proofs and standard assumptions are deferred to Appendix~\ref{app:nora-proof}.

\subsection{Compared with Mano}
In this section, we delineate the distinctions between Mano \cite{mano} and Nora. Mano alternates between row-wise and column-wise perpendicular projections, followed by normalization. Specifically, Mano accumulates momentum $v_t$ in the standard manner. During odd iterations, it removes the row-wise radial component to obtain $v_t^{r\perp}$ and applies row normalization: $d_t = \text{RN}(v_t^{r\perp})$. During even iterations, it removes the column-wise radial component to obtain $v_t^{c\perp}$ and applies column normalization: $d_t = \text{CN}(v_t^{c\perp})$.

The design philosophy of Nora diverges significantly from that of Mano. Nora originates from the properties expected by the optimizer itself, while Mano comes from Riemannian optimization \cite{intro2m,rim,rfk}. In practical implementation, Nora is more streamlined as it eliminates the need to track iteration parity (odd vs. even steps). More importantly, Nora is explicitly designed to leverage the row block diagonal dominance characteristic of the Transformer Hessian. By removing the heuristic column-wise orthogonal projection and normalization used in Mano, Nora demonstrates superior properties in ablation studies. These empirical results suggest that the row block diagonal dominance of the Transformer Hessian is a critical structural prior that must be prioritized.
\section{Experiments}
\label{sec:experiments}

We evaluate Nora on autoregressive language modeling with LLaMA-style Transformer models at two scales: 60M and 135M parameters. We compare Nora with three matrix-based optimizers, Muon, Mano, and RMNP, under the same data pipeline, tokenizer, context length, global batch size, learning-rate schedule, precision, evaluation cadence, and checkpointing cadence. Detailed model and training configurations are deferred to Appendix~\ref{app:exp_details}.

The compared optimizers differ only in the optimizer rule, the matrix learning-rate sweep grid, and weight decay. For all non-matrix parameter groups, we use the same auxiliary Adam setting within each model size. For matrix-shaped parameters, we tune the matrix learning rate over optimizer-specific grids and select the best run by validation loss. Muon uses a slightly different sweep range from the other matrix optimizers because it is more sensitive to large matrix learning rates in our preliminary runs. The full sweep grids are listed in Appendix~\ref{app:exp_details}. Importantly, Nora uses weight decay $0$ in all runs, whereas Muon, Mano, and RMNP use weight decay $0.1$. We treat this as part of Nora's optimizer configuration and report it explicitly in all result tables.

\subsection{Best Results from Different Optimizers}

Table~\ref{tab:main_lm_results} reports the main language-modeling results. For each optimizer and model size, we report the matrix learning rate selected from the sweep, the validation loss at the selected setting, and validation perplexity. This format separates peak validation performance from the final checkpoint quality.

\begin{table*}[h]
\centering
\caption{Main language-modeling results. For each optimizer and model size, the matrix learning rate is selected by validation loss over the sweep grid in Appendix~\ref{app:exp_details}. Lower validation loss and perplexity are better. Best results are shown in bold.}
\label{tab:main_lm_results}
\resizebox{\linewidth}{!}{
\begin{tabular}{lccccccc}
\toprule
\multirow{2}{*}{Optimizer} & \multirow{2}{*}{Weight decay} 
& \multicolumn{3}{c}{60M} 
& \multicolumn{3}{c}{135M} \\
\cmidrule(lr){3-5}
\cmidrule(lr){6-8}
& & Best matrix LR & Val. loss & Val. ppl. 
& Best matrix LR & Val. loss & Val. ppl. \\
\midrule
Muon & 0.1 & 0.01  & 3.44 & 31.09 & 0.01  & 3.142 & 23.17 \\
Mano & 0.1 & 0.004 & 3.39 & 29.55 & 0.003 & 3.097 & 22.13 \\
RMNP & 0.1 & 0.004 & 3.41 & 30.12 & 0.01  & 3.112 & 22.46 \\
Nora & 0.0 & 0.004 & \textcolor{red!80!black}{\textbf{3.37}} & \textcolor{red!80!black}{\textbf{28.94}} & 0.003 & \textcolor{red!80!black}{\textbf{3.079}} & \textcolor{red!80!black}{\textbf{21.74}} \\
\bottomrule
\end{tabular}
}
\end{table*}

\begin{table}[htbp]
\caption{
Comparison of Mano and Nora (Model: 135M, $\text{Weight decay}=0$)
}
\centering
\begin{tabular}{c|cc|cc|cc|cc}
\hline
\multirow{2}{*}{Optimizer} & \multicolumn{2}{c}{0.003}              & \multicolumn{2}{c}{0.005}              & \multicolumn{2}{c}{0.01}               & \multicolumn{2}{c}{0.02}               \\ \cline{2-9} 
                            & \multicolumn{1}{c|}{ppl} & loss  & \multicolumn{1}{c|}{ppl} & loss  & \multicolumn{1}{c|}{ppl} & loss  & \multicolumn{1}{c|}{ppl} & loss  \\ \hline
Mano                        & \multicolumn{1}{c|}{22.13}      & 3.097 & \multicolumn{1}{c|}{22.14}      & 3.098 & \multicolumn{1}{c|}{23.34}      & 3.150 & \multicolumn{1}{c|}{25.07}      & 3.222 \\ \hline
Nora                        & \multicolumn{1}{c|}{21.74}      & 3.079 & \multicolumn{1}{c|}{21.86}      & 3.085 & \multicolumn{1}{c|}{22.43}      & 3.111 & \multicolumn{1}{c|}{23.39}      & 3.152 \\ \hline
\end{tabular}
\label{xrsy}
\end{table}

\begin{figure}[h]
\centering
\begin{minipage}{0.48\linewidth}
    \centering
    \includegraphics[width=\linewidth]{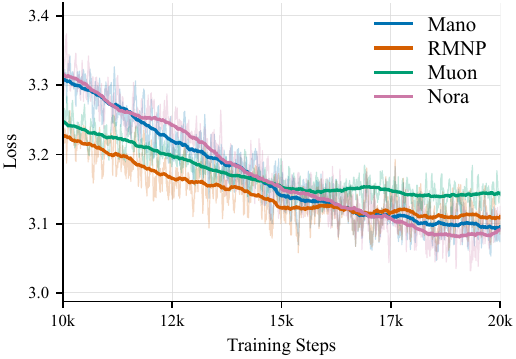}
\end{minipage}
\hfill
\begin{minipage}{0.48\linewidth}
    \centering
    \includegraphics[width=\linewidth]{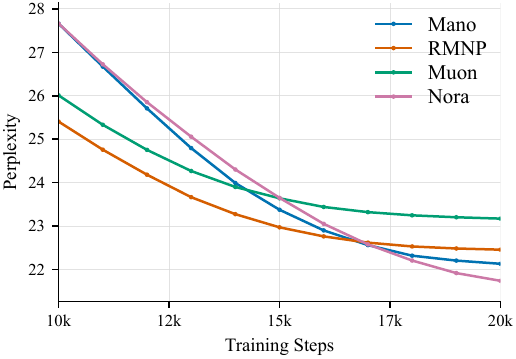}
\end{minipage}
\caption{
Training dynamics on the 135M model. Left: loss over training steps.
Right: perplexity over training steps. Nora continues to improve late in
training and finishes with the lowest loss and perplexity.
}
\label{fig:135m_training_dynamics}
\end{figure}

Figure~\ref{fig:135m_training_dynamics} shows the 135M training dynamics from
10k to 20k steps. RMNP has the strongest early-stage loss and perplexity, while
Muon improves more slowly and plateaus at a higher value. Mano and Nora start
from higher loss and perplexity, but both continue to improve after 15k steps.
Nora shows the clearest late-stage improvement: it overtakes the other methods
near the end of training and reaches the lowest final loss and perplexity. This indicates that Nora uses the same training budget more \underline{\textbf{efficiently}},
achieving better optimization quality without requiring additional training
steps. This
trend is consistent with Table~\ref{tab:main_lm_results}, where Nora achieves
the best 135M validation loss and perplexity among the compared optimizers.

\subsection{Runtime Comparison}
We also compare the cost of row normalization with Newton--Schulz orthogonalization
on representative matrix shapes from LLaMA-style models. The benchmark uses CUDA
with bfloat16 precision, 20 warmup iterations, and 200 measured iterations.
Newton--Schulz uses five iterations. As shown in \Cref{tab:runtime_comparison},
row normalization is consistently cheaper than Newton--Schulz. The gap is already
about one order of magnitude for small and medium matrices, and becomes much larger
for the 1B-scale MLP matrices, where Newton--Schulz is more than 70 times slower. This highlights the substantial \underline{\textbf{speed}} advantage of row normalization in practical
training settings. Results support the practical motivation for replacing matrix-level
orthogonalization with a row-wise normalization step.

\begin{table}[h]
\centering
\caption{
Runtime comparison between row normalization and five-step Newton--Schulz
orthogonalization on representative LLaMA-style matrix shapes. Times are mean
kernel runtimes in milliseconds under CUDA and bfloat16 precision.
}
\label{tab:runtime_comparison}
\resizebox{\linewidth}{!}{
\begin{tabular}{cccccc}
\toprule
Model scale & Matrix shape & Representative layer
& Row normalization (ms) & NS(5) (ms) & NS / row-norm \\
\midrule
60M  & $512 \times 512$    & attention: hidden $\times$ hidden
     & 0.0689 & 0.6554 & 9.52$\times$ \\
60M  & $1376 \times 512$   & MLP: intermediate $\times$ hidden
     & 0.0682 & 0.6853 & 10.05$\times$ \\
60M  & $512 \times 1376$   & MLP: hidden $\times$ intermediate
     & 0.0688 & 0.6623 & 9.63$\times$ \\
135M & $768 \times 768$    & attention: hidden $\times$ hidden
     & 0.0686 & 0.6520 & 9.50$\times$ \\
135M & $2048 \times 768$   & MLP: intermediate $\times$ hidden
     & 0.0692 & 0.6871 & 9.93$\times$ \\
135M & $768 \times 2048$   & MLP: hidden $\times$ intermediate
     & 0.0691 & 0.6534 & 9.46$\times$ \\
350M & $1024 \times 1024$  & attention: hidden $\times$ hidden
     & 0.0674 & 0.6397 & 9.49$\times$ \\
350M & $2816 \times 1024$  & MLP: intermediate $\times$ hidden
     & 0.0687 & 0.8625 & 12.56$\times$ \\
350M & $1024 \times 2816$  & MLP: hidden $\times$ intermediate
     & 0.0675 & 0.6941 & 10.28$\times$ \\
1B   & $2048 \times 2048$  & attention: hidden $\times$ hidden
     & 0.0684 & 2.0552 & 30.06$\times$ \\
1B   & $5461 \times 2048$  & MLP: intermediate $\times$ hidden
     & 0.0985 & 6.9985 & 71.02$\times$ \\
1B   & $2048 \times 5461$  & MLP: hidden $\times$ intermediate
     & 0.1084 & 7.9678 & 73.51$\times$ \\
\bottomrule
\end{tabular}
}
\end{table}
\subsection{Ablation Experiment between Nora and Mano}
To verify that row-normalization indeed captures the row-diagonal dominance inherent in the Transformer Hessian, we conducted an ablation study. Given that Nora and Mano share similar algorithmic frameworks, it is crucial to distinguish their structural advantages from hyperparameter effects. While the previous experiments utilized Mano’s recommended $\text{weight decay}=0.1$ against Nora’s default $\text{weight decay}=0$, we eliminated this discrepancy in the ablation study by setting Mano’s $\text{weight decay}$ to $0$. As shown in Table \ref{xrsy}, the results demonstrate that Nora’s superiority stems from its intrinsic algorithmic architecture rather than an advantage gained from weight decay settings.
\begin{figure}[htbp]
    \centering
    \includegraphics[width=1.0\textwidth]{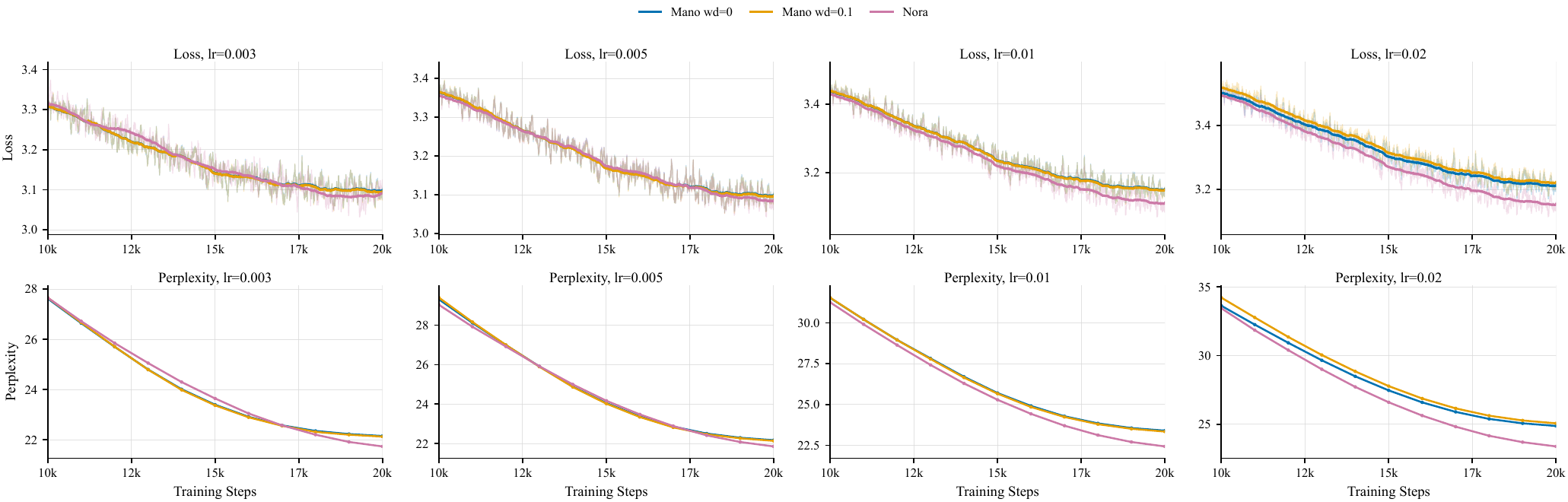}
    \caption{Training dynamics on the 135M model. This figure illustrates the perplexity and loss decay curves for Nora (default $\text{weight decay}=0$) in comparison with Mano (under $\text{weight decay}=0$ and $\text{weight decay}=0.1$).}
    \label{dbsy}
\end{figure}

Furthermore, Figure \ref{dbsy} illustrates the perplexity and loss decay curves for both Mano and Nora at the 135M model scale, under configurations of $\text{weight decay}=0$ and $\text{weight decay}=0.1$. It is evident that Nora consistently outperforms Mano across all settings. For a comprehensive overview, all remaining experimental results and corresponding convergence curves are provided in Appendix \ref{app:exp_details}, where we focus on the sensitivity of hyper-parameters and \underline{\textbf{stability}} of our proposed optimizer.
\
% Please add the following required packages to your document preamble:
% \usepackage{multirow}
% Please add the following required packages to your document preamble:
% \usepackage{multirow}

\section{Conclusion}
% This paper introduces the Nora optimizer, designed to satisfy the essential requirements of efficiency, stability, and speed. We demonstrate that Nora is inherently scalable and derive its corresponding learning rate scaling rules. Furthermore, we provide a rigorous convergence proof for Nora in non-convex settings. Empirical results confirm that Nora exhibits superior properties compared to existing state-of-the-art optimizers.
We introduce Nora, a normalized orthogonal row-alignment optimizer for scalable
LLM training. Nora is motivated by the geometry of scale-invariant neural
networks, where effective learning should mainly occur along angular rather than
radial directions. By projecting momentum onto the row-wise orthogonal complement
of the weights and applying row-wise normalization, Nora preserves stable
scale-invariant dynamics while retaining efficient Muon-like preconditioning
under the row-diagonal structure of Transformer Hessians. We further establish
Nora's scalability through $\mu$P-based width-scaling analysis and provide
non-convex convergence guarantees. Experiments on LLaMA-style models show that
Nora achieves the best validation loss and perplexity among the compared
matrix-based optimizers, while its row-wise normalization brings a clear speed
advantage over Newton--Schulz orthogonalization. Overall, Nora demonstrates that
efficiency, stability, and speed can be unified through a simple row-wise
geometric principle, offering a practical and principled direction for scalable
optimizer design.

\clearpage
\bibliographystyle{unsrt}
\normalem
\bibliography{example_paper.bib}

%%%%%%%%%%%%%%%%%%%%%%%%%%%%%%%%%%%%%%%%%%%%%%%%%%%%%%%%%%%%

\newpage
\tableofcontents
\newpage
\appendix
\section{Proof of Theorem}
\label{app:nora-proof}
\subsection{Proof of Theorem 4.1}
   \textbf{Theorem 4.1 (Nora Scaling under the Scaling Hypothesis)} \textit{Consider a neural network layer defined by $h = wx$, where the weights $w \in \mathbb{R}^{m \times n}$. Assume the input activation $x \in \mathbb{R}^n$ satisfies the scaling hypothesis under standard deep learning initialization: $\|x\|_2 \le \gamma \sqrt{n}$, where $\gamma = \Theta(1)$ is a constant. Suppose Nora updates the parameters by $w_{t+1} = w_t - \eta_t d_t$. Then $|\Delta h_i| \le \eta_t \gamma \sqrt{n}$. To achieve the stable feature learning limit required by $\mu$P theory—specifically, to ensure the activation update magnitude satisfies $|\Delta h_i| = \Theta(1)$, the learning rate $\eta_t$ for Nora must follow the width-scaling rule, $\eta_t \propto \frac{1}{\sqrt{n}}$.}
\begin{proof}
    For any output neuron $i$, the change in the hidden activation $\Delta h_i$ during the forward pass is:
    \begin{equation}
        \Delta h_i = (w_{t+1, i:} - w_{t, i:}) x = -\eta_t \langle d_{t,i:}, x \rangle
    \end{equation}
    By taking the absolute value and applying the Cauchy-Schwarz inequality, we obtain:
    \begin{equation}
        |\Delta h_i| = \eta_t |\langle (d_t)_{i:}, x \rangle| \le \eta_t \|(d_t)_{i:}\|_2 \|x\|_2
    \end{equation}
    Nora's row-wise normalization strictly ensures $\|(d_t)_{i:}\|_2 \le 1$. Combined with the input assumption $\|x\|_2 \le \gamma \sqrt{n}$, substituting these into the inequality yields a rigorous upper bound for the forward update:
    \begin{equation}
        |\Delta h_i| \le \eta_t \cdot 1 \cdot \gamma \sqrt{n} = \eta_t \gamma \sqrt{n}
    \end{equation}
    To ensure that the feature update magnitude $|\Delta h_i|$ neither diverges nor vanishes as $n \to \infty$ (i.e., satisfies the Maximal Update condition $|\Delta h_i| = \Theta(1)$), we require the upper bound $\eta_t \gamma \sqrt{n}$ to also be of order $\Theta(1)$. Given that $\gamma$ is a constant, this necessarily implies $\eta_t = \frac{c}{\sqrt{n}}$, where $c$ is a constant independent of $n$. This completes the proof.
\end{proof}
\subsection{Proof of Theorem 4.2}
   \textbf{Theorem 4.2 (symptotic Convergence))} \textit{Consider a hidden layer $h = wx$, where $w \in \mathbb{R}^{m \times n}$. Under the standard infinite-width random initialization hypothesis, assume the row vectors of $w$ follow $w_{i:} \sim \mathcal{N}(0, \frac{\sigma_w^2}{n} I_n)$, and the components of the input activation $x \in \mathbb{R}^n$ are independent with zero mean and variance $\sigma_x^2 = \Theta(1)$. Let the error signal backpropagated to this layer be $\delta_i = \Theta(1)$, and the vanilla gradient (without momentum) be $g_{i:} = \delta_i x^\top$. If Nora is applied to generate the update direction $d_{i:} = \text{RN}(\mathcal{P}_{w}^{r\perp}(g_{i:}))$, then as the network width $n \to \infty$, the inner product between the update direction and the input converges in probability to $\langle d_{i:}, x \rangle \xrightarrow{p} \text{sgn}(\delta_i) \sigma_x \sqrt{n}$. To achieve non-trivial feature learning such that $\Delta h_i = \Theta(1)$ as $n \to \infty$, the Nora learning rate must scale as $\eta = \frac{\eta_0}{\sqrt{n}}$.}
   \begin{proof}
       For output neuron $i$, Nora first removes the row-wise radial component of the gradient. Writing row vectors as columns inside inner products, this gives:
       \begin{equation}
           u_{i:} = \delta_i x - \frac{\delta_i \langle x, w_{i:} \rangle}{\|w_{i:}\|_2^2} w_{i:}.
       \end{equation}
       We now compute the high-dimensional orders. Since $x_j$ has variance $\sigma_x^2$ and $w_{i,j}$ has variance $\sigma_w^2/n$, the weak law of large numbers gives:
       \begin{equation}
           \|x\|_2^2 \xrightarrow{p} n \sigma_x^2 \implies \|x\|_2 = \sigma_x \sqrt{n} + o_p(\sqrt{n}).
       \end{equation}
       Similarly:
       \begin{equation}
         \|w_{i:}\|_2^2 \xrightarrow{p} \sigma_w^2 = \Theta(1).  
       \end{equation}
       Because $x$ and $w$ are independent and centered, $\langle x, w_{i:} \rangle$ is a sum of $n$ independent products with total variance $\sigma_x^2\sigma_w^2$. The central limit theorem gives:
       \begin{equation}
           \langle x, w_{i:} \rangle \xrightarrow{d} \mathcal{N}(0, \sigma_x^2 \sigma_w^2) = \mathcal{O}_p(1).
       \end{equation}
       Let $r_{i:}$ denote the radial component removed by the projection. Its Euclidean norm satisfies:
       \begin{equation}
           \|r_{i:}\|_2 = \left| \frac{\delta_i \langle x, w_{i:} \rangle}{\|w_{i:}\|_2^2} \right| \|w_{i:}\|_2 = \frac{|\delta_i| |\mathcal{O}_p(1)|}{\sigma_w^2} \sigma_w = \mathcal{O}_p(1).
       \end{equation}
       The primary gradient term $\delta_i x$ has norm $\Theta(\sqrt{n})$, while the removed radial component is $\mathcal{O}_p(1)$. Hence the projected vector satisfies:
       \begin{equation}
           \|u_{i:}\|_2 = \|\delta_i x - r_{i:}\|_2 = |\delta_i| \|x\|_2 + \mathcal{O}_p(1) = |\delta_i| \sigma_x \sqrt{n} (1 + o_p(1)).
       \end{equation}
       Row-wise normalization gives:
       \begin{equation}
           d_{i:} = \frac{u_{i:}}{\|u_{i:}\|_2} = \frac{\delta_i x - r_{i:}}{|\delta_i| \sigma_x \sqrt{n} (1 + o_p(1))}.
       \end{equation}
       The forward update depends on the inner product:
       \begin{equation}
          \langle d_{i:}, x \rangle = \frac{\delta_i \langle x, x \rangle - \langle r_{i:}, x \rangle}{|\delta_i| \sigma_x \sqrt{n} (1 + o_p(1))}.
       \end{equation}
       Since $\langle x, x \rangle = n \sigma_x^2 + o_p(n)$ and $|\langle r_{i:}, x \rangle| \le \|r_{i:}\|_2 \|x\|_2 = \mathcal{O}_p(\sqrt{n})$,
       \begin{equation}
           \langle d_{i:}, x \rangle = \frac{\delta_i n \sigma_x^2 - \mathcal{O}_p(\sqrt{n})}{|\delta_i| \sigma_x \sqrt{n} (1 + o_p(1))} = \text{sgn}(\delta_i) \sigma_x \sqrt{n} + o_p(\sqrt{n}).
       \end{equation}
       Thus $\langle d_{i:}, x \rangle \xrightarrow{p} \text{sgn}(\delta_i) \sigma_x \sqrt{n}$ holds. The actual forward update is $\Delta h_i = -\eta \langle d_{i:}, x \rangle = \mp \eta \sigma_x \sqrt{n}$. To achieve $\Delta h_i = \Theta(1)$ for non-trivial feature learning, it is necessary that $\eta = \Theta(n^{-1/2})$.
   \end{proof}

\subsection{Preliminaries for Theorem 4.7--Corollary 4.9}

\subsubsection{Setup}

For the convergence analysis, let \(m\) denote the number of rows in \(w\in\mathbb{R}^{m\times n}\). We reserve \(v_t\) for the momentum sequence. For any matrix \(w\) with nonzero rows, define the row-wise perpendicular projection:
\begin{equation}
    [\mathcal{P}^{r\perp}_{w}(x)]_{i:}
    :=
    x_{i:}
    -
    \frac{\langle x_{i:},w_{i:}\rangle}{\|w_{i:}\|_2^2}\,w_{i:}.
\end{equation}
This projection removes the row-wise radial component: \(\langle [\mathcal{P}^{r\perp}_{w}(x)]_{i:},w_{i:}\rangle=0\) for every row \(i\). Equivalently, a matrix \(z\) is row-wise perpendicular to \(w\) when \(\langle z_{i:},w_{i:}\rangle=0\) for all \(i\).

For \(t=0,\ldots,T-1\), Nora is indexed as:
\begin{equation}
 g_t=\nabla f(w_t;\xi_t),\qquad
 v_{t+1}=\beta v_t+(1-\beta)g_t,\qquad
 v_0=0,
\end{equation}
\begin{equation}
 v_{t+1}^{r\perp}:=\mathcal{P}^{r\perp}_{w_t}(v_{t+1}),\qquad
 d_t=\operatorname{RN}\bigl(v_{t+1}^{r\perp}\bigr),\qquad
 w_{t+1}=w_t-\eta d_t.
\end{equation}
Thus \(d_t\) is the row-wise normalization of the component of the momentum that is row-wise perpendicular to \(w_t\). We use the convention \(0/0=0\) in \(\operatorname{RN}\), applied row-wise:
\begin{equation}
[\operatorname{RN}(x)]_{i:}
=
\begin{cases}
\dfrac{x_{i:}}{\|x_{i:}\|_2}, & x_{i:}\neq 0,\\
0, & x_{i:}=0.
\end{cases}
\end{equation}
For a matrix \(x\in\mathbb{R}^{m\times n}\), define:
\begin{equation}
\|x\|_{1,2}:=\sum_{i=1}^{m}\|x_{i:}\|_2,
\qquad
\|x\|_{\infty,2}:=\max_{1\le i\le m}\|x_{i:}\|_2.
\end{equation}
We also define the natural filtration:
\begin{equation}
\mathcal{F}_t:=\sigma(w_0,\xi_0,\ldots,\xi_{t-1}),
\end{equation}
so that \(w_t\) is \(\mathcal{F}_t\)-measurable and \(g_t\) is sampled conditionally on \(\mathcal{F}_t\). Finally, define the projected gradient and momentum-tracking errors:
\begin{equation}
\mathcal{G}_t:=\mathcal{P}^{r\perp}_{w_t}\bigl(\nabla f(w_t)\bigr),
\qquad
 e_t:=v_{t+1}-\nabla f(w_t),
\qquad
\bar e_t:=v_{t+1}^{r\perp}-\mathcal{G}_t=\mathcal{P}^{r\perp}_{w_t}(e_t).
\end{equation}
The projected gradient \(\mathcal{G}_t\) is the relevant stationarity measure because Nora uses directions that are row-wise perpendicular to \(w_t\). Therefore radial components of \(\nabla f(w_t)\) do not contribute to the descent inner product with \(d_t\).

\subsubsection{Auxiliary Lemmas}

\begin{lemma}[Row-wise projection is non-expansive]
\label{lem:nora-proj}
For any \(w\in\mathbb{R}^{m\times n}\) with nonzero rows and any \(x\in\mathbb{R}^{m\times n}\),
\begin{equation}
\|\mathcal{P}^{r\perp}_{w}(x)\|_F\le \|x\|_F,
\qquad
\|\mathcal{P}^{r\perp}_{w}(x)\|_{1,2}\le \|x\|_{1,2},
\qquad
\|\mathcal{P}^{r\perp}_{w}(x)\|_{\infty,2}\le \|x\|_{\infty,2}.
\end{equation}
\end{lemma}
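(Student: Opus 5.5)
The plan is to reduce all three inequalities to a single row-wise statement: for each row $i$, $\|[\mathcal{P}^{r\perp}_{w}(x)]_{i:}\|_2\le\|x_{i:}\|_2$. The definition in \eqref{eq:row-perpendicular-projection} acts independently on each row, and each of the three norms is a monotone function of the vector of row norms $(\|x_{1:}\|_2,\ldots,\|x_{m:}\|_2)$. Concretely, $\|x\|_F^2=\sum_i\|x_{i:}\|_2^2$, $\|x\|_{1,2}=\sum_i\|x_{i:}\|_2$, and $\|x\|_{\infty,2}=\max_i\|x_{i:}\|_2$. So a row-wise contraction immediately yields all three matrix-level contractions by summing squares, summing, and taking the maximum.

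For the row-wise statement, I fix $i$ and set $u=w_{i:}\neq 0$ and $a=x_{i:}$. The $i$-th row of the projection is $p=a-\frac{\langle a,u\rangle}{\|u\|_2^2}u$, the orthogonal projection of $a$ onto $u^{\perp}$. By construction $\langle p,u\rangle=\langle a,u\rangle-\langle a,u\rangle=0$, so $a=p+q$ with $q=\frac{\langle a,u\rangle}{\|u\|_2^2}u$ orthogonal to $p$. The Pythagorean identity then gives
\[
\|a\|_2^2=\|p\|_2^2+\frac{\langle a,u\rangle^2}{\|u\|_2^2}\ge\|p\|_2^2 .
\]
I will verify this by expanding $\|p\|_2^2$ directly, since that avoids appealing to general facts about orthogonal projectors. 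The nonzero-row hypothesis on $w$ is used only to make the denominator well defined.

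There is no real obstacle here. The only point needing care is to state explicitly that the projection decouples across rows, with row $i$ of the output depending only on $x_{i:}$ and $w_{i:}$. This is what makes the row-wise bound transfer to the mixed norms $\|\cdot\|_{1,2}$ and $\|\cdot\|_{\infty,2}$ and not merely to the Frobenius norm. The Frobenius case could alternatively be obtained from the projection being an orthogonal projector on $\mathbb{R}^{m\times n}$ in the trace inner product, but the row-wise route handles all three cases uniformly.
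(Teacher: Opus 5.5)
Your proposal is correct and follows the paper's proof: the paper also works row by row, writing row $i$ of $\mathcal{P}^{r\perp}_{w}(x)$ as $x_{i:}P_i$ with $P_i=I_n-w_{i:}^{\top}w_{i:}/\|w_{i:}\|_2^2$ an orthogonal projector, so that $\|x_{i:}P_i\|_2\le\|x_{i:}\|_2$. It then sums over rows (squared row norms for the Frobenius case) and takes the maximum over rows, exactly as you plan; your explicit Pythagorean computation simply replaces the paper's appeal to the contraction property of orthogonal projectors.
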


\begin{proof}
For each row \(i\), let
\begin{equation}
P_i:=I_n-\frac{w_{i:}^{\top}w_{i:}}{\|w_{i:}\|_2^2}.
\end{equation}
Then \([\mathcal{P}^{r\perp}_{w}(x)]_{i:}=x_{i:}P_i\). Since \(P_i\) is an orthogonal projector on \(\mathbb{R}^n\),
\begin{equation}
\|x_{i:}P_i\|_2\le \|x_{i:}\|_2.
\end{equation}
Summing over rows gives the Frobenius and \(\|\cdot\|_{1,2}\) bounds, while taking the maximum over rows gives the \(\|\cdot\|_{\infty,2}\) bound.
\end{proof}

\begin{lemma}[Basic geometry of Nora]
\label{lem:nora-geom}
Let \(z=\mathcal{P}^{r\perp}_{w}(x)\) and \(d=\operatorname{RN}(z)\). Then:
\begin{enumerate}
    \item \(\langle d_{i:},w_{i:}\rangle=0\) for every row \(i\);
    \item \(\|d\|_F\le \sqrt{m}\);
    \item \(\|d\|_{\infty,2}\le 1\);
    \item \(\langle z,d\rangle = \|z\|_{1,2}\);
    \item \(\langle z,d\rangle \ge \|z\|_F\).
\end{enumerate}
\end{lemma}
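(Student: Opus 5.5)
The five claims are all row-by-row statements, so the plan is to fix a row \(i\), write \(z_{i:}=x_{i:}P_i\) with the orthogonal projector \(P_i=I_n-w_{i:}^{\top}w_{i:}/\|w_{i:}\|_2^2\) from the proof of Lemma~\ref{lem:nora-proj}, and split into the cases \(z_{i:}=0\) and \(z_{i:}\neq0\) dictated by the convention in \(\operatorname{RN}\). In the case \(z_{i:}=0\) we have \(d_{i:}=0\), and every row-level statement below holds trivially (all the relevant quantities vanish). So the content lies in rows with \(z_{i:}\neq0\), where \(d_{i:}=z_{i:}/\|z_{i:}\|_2\).

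For item 1, \(\langle d_{i:},w_{i:}\rangle\) is either \(0\) or \(\|z_{i:}\|_2^{-1}\langle z_{i:},w_{i:}\rangle\). The latter vanishes because \(w_{i:}P_i=0\) and \(P_i\) is symmetric. Items 2 and 3 follow from \(\|d_{i:}\|_2\in\{0,1\}\). Summing the squares over the \(m\) rows gives \(\|d\|_F^2\le m\), and taking the maximum gives \(\|d\|_{\infty,2}\le1\).

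For item 4, I would expand the Frobenius inner product row-wise: \(\langle z,d\rangle=\sum_i\langle z_{i:},d_{i:}\rangle\). Each summand equals \(\|z_{i:}\|_2^2/\|z_{i:}\|_2=\|z_{i:}\|_2\) when the row is nonzero, and \(0=\|z_{i:}\|_2\) otherwise. Hence \(\langle z,d\rangle=\|z\|_{1,2}\).

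Item 5 then reduces to the norm comparison \(\|z\|_{1,2}\ge\|z\|_F\). This holds because \(\bigl(\sum_i a_i\bigr)^2\ge\sum_i a_i^2\) for nonnegative \(a_i=\|z_{i:}\|_2\), since the cross terms are nonnegative. No step is a real obstacle. The only point needing care is the consistent handling of zero rows under the \(0/0=0\) convention, so that the identities in items 1 and 4 are not stated with a division by zero.
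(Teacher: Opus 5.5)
Your proposal is correct and follows the paper's own row-by-row argument: each nonzero row of \(d\) is a rescaling of \(z_{i:}\) and so stays orthogonal to \(w_{i:}\), the bound \(\|d_{i:}\|_2\le 1\) is summed or maximized over rows, the row-wise expansion gives \(\langle z,d\rangle=\sum_i\|z_{i:}\|_2\), and the comparison \(\|z\|_{1,2}\ge\|z\|_F\) finishes item 5. You handle zero rows under the \(0/0=0\) convention explicitly in item 4, which the paper's displayed sum does not do.
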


\begin{proof}
Each row \(z_{i:}\) is orthogonal to \(w_{i:}\). Every nonzero row of \(d\) is a scalar multiple of \(z_{i:}\), so it is also orthogonal to \(w_{i:}\). Thus \(d\) is row-wise perpendicular to \(w\).

For every row, \(\|d_{i:}\|_2\le 1\). Thus:
\begin{equation}
\|d\|_F^2=\sum_{i=1}^{m}\|d_{i:}\|_2^2\le m,
\qquad
\|d\|_{\infty,2}=\max_i \|d_{i:}\|_2\le 1.
\end{equation}
Moreover,
\begin{equation}
\langle z,d\rangle
=
\sum_{i=1}^{m}\left\langle z_{i:},\frac{z_{i:}}{\|z_{i:}\|_2}\right\rangle
=
\sum_{i=1}^{m}\|z_{i:}\|_2
=
\|z\|_{1,2}.
\end{equation}
Finally,
\begin{equation}
\|z\|_{1,2}=\sum_i \|z_{i:}\|_2
\ge
\left(\sum_i \|z_{i:}\|_2^2\right)^{1/2}
=
\|z\|_F,
\end{equation}
which gives \(\langle z,d\rangle\ge \|z\|_F\).
\end{proof}

\begin{lemma}[Descent under Frobenius smoothness]
\label{lem:nora-descent-f}
Under Assumption~\ref{ass:nora-smooth}(a), for every \(t\),
\begin{equation}
f(w_t)-f(w_{t+1})
\ge
\eta\langle \nabla f(w_t),d_t\rangle
-
\frac{L_F\eta^2 m}{2}.
\end{equation}
\end{lemma}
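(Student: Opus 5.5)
The plan is the standard quadratic upper bound (descent lemma) implied by $L_F$-Lipschitz gradients, evaluated along the Nora step, with the step length controlled by Lemma~\ref{lem:nora-geom}.

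\textbf{Step 1 (quadratic upper bound).} Under Assumption~\ref{ass:nora-smooth}(a), the fundamental theorem of calculus along the segment $w_t+s(w_{t+1}-w_t)$, $s\in[0,1]$, gives
\begin{equation}
f(w_{t+1})=f(w_t)+\langle \nabla f(w_t),w_{t+1}-w_t\rangle+\int_0^1\bigl\langle \nabla f\bigl(w_t+s(w_{t+1}-w_t)\bigr)-\nabla f(w_t),\,w_{t+1}-w_t\bigr\rangle\,ds .
\end{equation}
By Cauchy--Schwarz in the Frobenius inner product and the Lipschitz bound, the integrand is at most $sL_F\|w_{t+1}-w_t\|_F^2$. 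Integrating yields
\begin{equation}
f(w_{t+1})\le f(w_t)+\langle \nabla f(w_t),w_{t+1}-w_t\rangle+\frac{L_F}{2}\|w_{t+1}-w_t\|_F^2 .
\end{equation}

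\textbf{Step 2 (substitute the update).} Insert $w_{t+1}-w_t=-\eta d_t$. The linear term becomes $-\eta\langle\nabla f(w_t),d_t\rangle$, and the quadratic term becomes $\frac{L_F\eta^2}{2}\|d_t\|_F^2$.

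\textbf{Step 3 (bound the step length).} Since $d_t=\operatorname{RN}(v_{t+1}^{r\perp})$ with $v_{t+1}^{r\perp}=\mathcal{P}^{r\perp}_{w_t}(v_{t+1})$, Lemma~\ref{lem:nora-geom}(2) applies and gives $\|d_t\|_F^2\le m$. This holds even when some rows vanish, by the $0/0=0$ convention. Rearranging then gives the claimed inequality
\begin{equation}
f(w_t)-f(w_{t+1})\ge \eta\langle\nabla f(w_t),d_t\rangle-\frac{L_F\eta^2 m}{2}.
\end{equation}

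No step is a real obstacle. The only point needing care is Step 1, which uses only the Frobenius Lipschitz condition and requires no convexity. The bound is deterministic and holds for every realization of $\xi_t$.
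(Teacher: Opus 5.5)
Your proof is correct and follows the paper's argument essentially step for step. The paper also derives the quadratic upper bound from the fundamental theorem of calculus, Cauchy--Schwarz and the Frobenius Lipschitz condition, then substitutes $w_{t+1}-w_t=-\eta d_t$ and bounds $\|d_t\|_F^2\le m$ via Lemma~\ref{lem:nora-geom}(ii).
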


\begin{proof}
Let \(\Delta_t:=w_{t+1}-w_t\). By the fundamental theorem of calculus,
\begin{equation}
f(w_t+\Delta_t)-f(w_t)
=
\int_0^1 \langle \nabla f(w_t+s\Delta_t),\Delta_t\rangle\,ds.
\end{equation}
Subtracting \(\langle\nabla f(w_t),\Delta_t\rangle\) and applying Cauchy--Schwarz gives:
\begin{align}
f(w_t+\Delta_t)-f(w_t)-\langle\nabla f(w_t),\Delta_t\rangle
&=
\int_0^1 \langle \nabla f(w_t+s\Delta_t)-\nabla f(w_t),\Delta_t\rangle\,ds \\
&\le
\int_0^1
\|\nabla f(w_t+s\Delta_t)-\nabla f(w_t)\|_F\,\|\Delta_t\|_F\,ds .
\end{align}
Assumption~\ref{ass:nora-smooth}(a) implies:
\begin{equation}
\|\nabla f(w_t+s\Delta_t)-\nabla f(w_t)\|_F
\le
L_Fs\|\Delta_t\|_F.
\end{equation}
Hence the standard quadratic upper bound follows from the stated smoothness assumption:
\begin{equation}
f(w_t+\Delta_t)
\le
f(w_t)+\langle\nabla f(w_t),\Delta_t\rangle
+
\frac{L_F}{2}\|\Delta_t\|_F^2.
\end{equation}
Substituting \(\Delta_t=-\eta d_t\) gives:
\begin{equation}
f(w_{t+1})
\le
f(w_t)-\eta\langle \nabla f(w_t),d_t\rangle
+
\frac{L_F\eta^2}{2}\|d_t\|_F^2.
\end{equation}
Using Lemma~\ref{lem:nora-geom}(ii), \(\|d_t\|_F^2\le m\), which yields the claim.
\end{proof}

\begin{lemma}[Descent under matched \((\infty,2)\)-smoothness]
\label{lem:nora-descent-m}
Under Assumption~\ref{ass:nora-smooth}(b), for every \(t\),
\begin{equation}
f(w_t)-f(w_{t+1})
\ge
\eta\langle \nabla f(w_t),d_t\rangle
-
\frac{L_{\infty,2}\eta^2}{2}.
\end{equation}
\end{lemma}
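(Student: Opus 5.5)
The plan mirrors the proof of Lemma~\ref{lem:nora-descent-f}, replacing the Frobenius Cauchy--Schwarz step with the dual pairing between \(\|\cdot\|_{1,2}\) and \(\|\cdot\|_{\infty,2}\). Set \(\Delta_t:=w_{t+1}-w_t=-\eta d_t\). By the fundamental theorem of calculus along the segment \(w_t+s\Delta_t\), \(s\in[0,1]\),
\begin{equation}
f(w_{t+1})-f(w_t)-\langle\nabla f(w_t),\Delta_t\rangle
=
\int_0^1\langle \nabla f(w_t+s\Delta_t)-\nabla f(w_t),\Delta_t\rangle\,ds .
\end{equation}

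The key auxiliary fact is the Hölder-type inequality \(|\langle a,b\rangle|\le \|a\|_{1,2}\|b\|_{\infty,2}\) for any \(a,b\in\mathbb{R}^{m\times n}\). It follows by writing \(\langle a,b\rangle=\sum_i\langle a_{i:},b_{i:}\rangle\), applying Cauchy--Schwarz row by row to get \(\sum_i\|a_{i:}\|_2\|b_{i:}\|_2\), and then bounding each \(\|b_{i:}\|_2\) by \(\max_i\|b_{i:}\|_2\).

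Next, I would apply this inequality to the integrand with \(a=\nabla f(w_t+s\Delta_t)-\nabla f(w_t)\) and \(b=\Delta_t\). Assumption~\ref{ass:nora-smooth}(b) then gives \(\|a\|_{1,2}\le L_{\infty,2}\, s\,\|\Delta_t\|_{\infty,2}\). Integrating \(s\) over \([0,1]\) yields the quadratic upper bound
\begin{equation}
f(w_{t+1})\le f(w_t)+\langle\nabla f(w_t),\Delta_t\rangle+\frac{L_{\infty,2}}{2}\|\Delta_t\|_{\infty,2}^2 .
\end{equation}

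Finally, I would substitute \(\Delta_t=-\eta d_t\). Lemma~\ref{lem:nora-geom}(iii) gives \(\|d_t\|_{\infty,2}\le 1\), so \(\|\Delta_t\|_{\infty,2}^2\le\eta^2\). Rearranging gives exactly \(f(w_t)-f(w_{t+1})\ge\eta\langle\nabla f(w_t),d_t\rangle-\tfrac{L_{\infty,2}\eta^2}{2}\).

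I expect no step to be a real obstacle. The only point needing care is that the matched smoothness assumption pairs \(\|\cdot\|_{1,2}\) on gradients with \(\|\cdot\|_{\infty,2}\) on displacements, and that these two norms are dual. Verifying this duality inequality is the one nontrivial ingredient. It is also why the bound is dimension-free: there is no factor \(m\), in contrast with Lemma~\ref{lem:nora-descent-f}.
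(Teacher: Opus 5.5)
Your proposal is correct and matches the paper's proof essentially step for step. Both use the fundamental-theorem-of-calculus expansion along $w_t+s\Delta_t$, the $(1,2)$--$(\infty,2)$ duality bound combined with Assumption~\ref{ass:nora-smooth}(b), and Lemma~\ref{lem:nora-geom}(iii) to get $\|\Delta_t\|_{\infty,2}\le\eta$; your explicit row-by-row Cauchy--Schwarz derivation of the duality inequality fills in a step the paper only asserts.
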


\begin{proof}
Let \(\Delta_t:=w_{t+1}-w_t=-\eta d_t\). By the fundamental theorem of calculus,
\begin{equation}
f(w_t+\Delta_t)-f(w_t)
=
\langle \nabla f(w_t),\Delta_t\rangle
+
\int_0^1
\left\langle
\nabla f(w_t+s\Delta_t)-\nabla f(w_t),\Delta_t
\right\rangle ds.
\end{equation}
For any matrices \(A\) and \(B\), the \((1,2)\)--\((\infty,2)\) duality bound gives:
\begin{equation}
|\langle A,B\rangle|\le \|A\|_{1,2}\|B\|_{\infty,2}.
\end{equation}
Therefore, by Assumption~\ref{ass:nora-smooth}(b),
\begin{align}
\left|
\left\langle
\nabla f(w_t+s\Delta_t)-\nabla f(w_t),\Delta_t
\right\rangle
\right|
&\le
\|\nabla f(w_t+s\Delta_t)-\nabla f(w_t)\|_{1,2}\|\Delta_t\|_{\infty,2} \\
&\le
L_{\infty,2}s\|\Delta_t\|_{\infty,2}^2.
\end{align}
Lemma~\ref{lem:nora-geom}(iii) gives \(\|\Delta_t\|_{\infty,2}=\eta\|d_t\|_{\infty,2}\le\eta\). Hence:
\begin{equation}
f(w_{t+1})-f(w_t)
\le
-\eta\langle \nabla f(w_t),d_t\rangle
+
\int_0^1 L_{\infty,2}s\eta^2\,ds
=
-\eta\langle \nabla f(w_t),d_t\rangle
+
\frac{L_{\infty,2}\eta^2}{2}.
\end{equation}
Rearranging proves the claim.
\end{proof}

\begin{lemma}[Projected inner-product lower bound in Frobenius norm]
\label{lem:nora-ip-f}
For every \(t\),
\begin{equation}
\langle \nabla f(w_t),d_t\rangle
\ge
\|\mathcal{G}_t\|_F-(\sqrt{m}+1)\|e_t\|_F.
\end{equation}
\end{lemma}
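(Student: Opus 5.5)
We must show $\langle \nabla f(w_t),d_t\rangle \ge \|\mathcal{G}_t\|_F-(\sqrt{m}+1)\|e_t\|_F$, where $e_t=v_{t+1}-\nabla f(w_t)$ is the unprojected error.

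\textbf{Step 1: reduce to the projected gradient.} By Lemma~\ref{lem:nora-geom}(i), each row of $d_t$ is orthogonal to the corresponding row of $w_t$. Since $\nabla f(w_t)-\mathcal{G}_t$ is row-wise a multiple of $w_t$, this gives $\langle \nabla f(w_t),d_t\rangle=\langle \mathcal{G}_t,d_t\rangle$.

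\textbf{Step 2: split off the error.} Set $z=v_{t+1}^{r\perp}$ and $\bar e_t=\mathcal{P}^{r\perp}_{w_t}(e_t)$. By linearity of the projection, $z=\mathcal{G}_t+\bar e_t$. Then
\begin{equation}
\langle \mathcal{G}_t,d_t\rangle=\langle z,d_t\rangle-\langle \bar e_t,d_t\rangle .
\end{equation}

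\textbf{Step 3: lower-bound the first term.} Lemma~\ref{lem:nora-geom}(v) gives $\langle z,d_t\rangle\ge\|z\|_F$. The reverse triangle inequality then gives $\|z\|_F\ge\|\mathcal{G}_t\|_F-\|\bar e_t\|_F$.

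\textbf{Step 4: bound the second term.} By Cauchy--Schwarz and Lemma~\ref{lem:nora-geom}(ii),
\begin{equation}
|\langle \bar e_t,d_t\rangle|\le\|\bar e_t\|_F\|d_t\|_F\le\sqrt{m}\,\|\bar e_t\|_F .
\end{equation}

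\textbf{Step 5: combine.} Steps 2--4 give $\langle \nabla f(w_t),d_t\rangle\ge\|\mathcal{G}_t\|_F-(\sqrt{m}+1)\|\bar e_t\|_F$. Lemma~\ref{lem:nora-proj} gives $\|\bar e_t\|_F\le\|e_t\|_F$, which yields the claim.

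\textbf{Degenerate rows.} The only delicate point is rows where $z_{i:}=0$. There $d_{i:}=0$ under the convention $0/0=0$, and Lemma~\ref{lem:nora-geom}(iv),(v) already accommodate this, since such rows contribute zero to both $\langle z,d_t\rangle$ and $\|z\|_{1,2}$. No further obstacle is expected.
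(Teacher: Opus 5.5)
Your proposal is correct and matches the paper's proof essentially step for step. You reduce to $\langle \mathcal{G}_t,d_t\rangle$ by row-wise orthogonality, write $v_{t+1}^{r\perp}=\mathcal{G}_t+\bar e_t$, bound the first piece by Lemma~\ref{lem:nora-geom}(v) plus the reverse triangle inequality and the second by Cauchy--Schwarz with $\|d_t\|_F\le\sqrt{m}$, and finish with non-expansiveness of $\mathcal{P}^{r\perp}_{w_t}$. Your remark on zero rows is a harmless addition that the paper leaves implicit in the $0/0=0$ convention.
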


\begin{proof}
Since \(d_t\) is row-wise perpendicular to \(w_t\) by Lemma~\ref{lem:nora-geom}(i), and \(\nabla f(w_t)-\mathcal{G}_t\) is row-wise parallel to \(w_t\), we have
\begin{equation}
\langle \nabla f(w_t),d_t\rangle
=
\langle \mathcal{G}_t,d_t\rangle.
\end{equation}
Since \(\mathcal{G}_t=v_{t+1}^{r\perp}-\bar e_t\),
\begin{equation}
\langle \mathcal{G}_t,d_t\rangle
=
\langle v_{t+1}^{r\perp},d_t\rangle-\langle \bar e_t,d_t\rangle.
\end{equation}
By Lemma~\ref{lem:nora-geom}(v),
\begin{equation}
\langle v_{t+1}^{r\perp},d_t\rangle\ge \|v_{t+1}^{r\perp}\|_F.
\end{equation}
By Cauchy--Schwarz and Lemma~\ref{lem:nora-geom}(ii),
\begin{equation}
|\langle \bar e_t,d_t\rangle|
\le
\|\bar e_t\|_F\|d_t\|_F
\le
\sqrt{m}\,\|\bar e_t\|_F.
\end{equation}
Also,
\begin{equation}
\|v_{t+1}^{r\perp}\|_F=\|\mathcal{G}_t+\bar e_t\|_F
\ge
\|\mathcal{G}_t\|_F-\|\bar e_t\|_F.
\end{equation}
Combining the above,
\begin{equation}
\langle \nabla f(w_t),d_t\rangle
\ge
\|\mathcal{G}_t\|_F-(\sqrt{m}+1)\|\bar e_t\|_F.
\end{equation}
Finally, by Lemma~\ref{lem:nora-proj},
\begin{equation}
\|\bar e_t\|_F=\|\mathcal{P}^{r\perp}_{w_t}(e_t)\|_F\le \|e_t\|_F.
\end{equation}
This proves the claim.
\end{proof}

\begin{lemma}[Projected inner-product lower bound in \((1,2)\)-norm]
\label{lem:nora-ip-12}
For every \(t\),
\begin{equation}
\langle \nabla f(w_t),d_t\rangle
\ge
\|\mathcal{G}_t\|_{1,2}-2\|e_t\|_{1,2}.
\end{equation}
\end{lemma}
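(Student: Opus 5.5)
The argument parallels Lemma~\ref{lem:nora-ip-f}, with the Frobenius estimates replaced by their $(1,2)$/$(\infty,2)$ counterparts.

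\textbf{Step 1: reduce to the projected gradient.} By Lemma~\ref{lem:nora-geom}(i), $d_t$ is row-wise perpendicular to $w_t$. The difference $\nabla f(w_t)-\mathcal{G}_t$ is row-wise parallel to $w_t$. Hence
\[
\langle \nabla f(w_t),d_t\rangle=\langle \mathcal{G}_t,d_t\rangle .
\]

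\textbf{Step 2: split off the tracking error.} Write $\mathcal{G}_t=v_{t+1}^{r\perp}-\bar e_t$ with $\bar e_t=\mathcal{P}^{r\perp}_{w_t}(e_t)$. Then
\[
\langle \mathcal{G}_t,d_t\rangle=\langle v_{t+1}^{r\perp},d_t\rangle-\langle \bar e_t,d_t\rangle .
\]
The first term equals $\|v_{t+1}^{r\perp}\|_{1,2}$ by Lemma~\ref{lem:nora-geom}(iv), applied with $z=v_{t+1}^{r\perp}$. For the second term, use the duality bound $|\langle A,B\rangle|\le\|A\|_{1,2}\|B\|_{\infty,2}$, which follows from row-wise Cauchy--Schwarz. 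Together with $\|d_t\|_{\infty,2}\le 1$ from Lemma~\ref{lem:nora-geom}(iii), this gives $|\langle \bar e_t,d_t\rangle|\le\|\bar e_t\|_{1,2}$.

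\textbf{Step 3: triangle inequality and non-expansiveness.} The $(1,2)$-norm is a norm, being a sum of row Euclidean norms. So the reverse triangle inequality gives $\|v_{t+1}^{r\perp}\|_{1,2}=\|\mathcal{G}_t+\bar e_t\|_{1,2}\ge\|\mathcal{G}_t\|_{1,2}-\|\bar e_t\|_{1,2}$. Combining with Step 2 yields
\[
\langle \nabla f(w_t),d_t\rangle\ge\|\mathcal{G}_t\|_{1,2}-2\|\bar e_t\|_{1,2}.
\]
Lemma~\ref{lem:nora-proj} then gives $\|\bar e_t\|_{1,2}\le\|e_t\|_{1,2}$, which completes the bound.

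The only point needing care is the convention $0/0=0$ in $\operatorname{RN}$. Lemma~\ref{lem:nora-geom}(iv) and (iii) are stated for general $z$, so zero rows contribute zero to both sides and cause no problem. Beyond this bookkeeping, I do not expect a genuine obstacle; the argument is a direct transcription of the Frobenius case, and the constant improves from $\sqrt m+1$ to $2$ because $\|d_t\|_{\infty,2}\le 1$ is dimension-free.
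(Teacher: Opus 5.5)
Your proof is correct and follows the paper's argument essentially step for step. The steps are: the reduction $\langle \nabla f(w_t),d_t\rangle=\langle \mathcal{G}_t,d_t\rangle$, the split through $\bar e_t$, Lemma~\ref{lem:nora-geom}(iv) for the main term, the $(1,2)$--$(\infty,2)$ duality with $\|d_t\|_{\infty,2}\le 1$ for the error term, the reverse triangle inequality, and finally non-expansiveness from Lemma~\ref{lem:nora-proj}; your remark on the $0/0=0$ convention is a harmless extra check that the paper leaves implicit.
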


\begin{proof}
As in Lemma~\ref{lem:nora-ip-f},
\begin{equation}
\langle \nabla f(w_t),d_t\rangle
=
\langle \mathcal{G}_t,d_t\rangle
=
\langle v_{t+1}^{r\perp},d_t\rangle-\langle \bar e_t,d_t\rangle.
\end{equation}
By Lemma~\ref{lem:nora-geom}(iv),
\begin{equation}
\langle v_{t+1}^{r\perp},d_t\rangle=\|v_{t+1}^{r\perp}\|_{1,2}.
\end{equation}
By duality and Lemma~\ref{lem:nora-geom}(iii),
\begin{equation}
|\langle \bar e_t,d_t\rangle|
\le
\|\bar e_t\|_{1,2}\|d_t\|_{\infty,2}
\le
\|\bar e_t\|_{1,2}.
\end{equation}
Also,
\begin{equation}
\|v_{t+1}^{r\perp}\|_{1,2}
=
\|\mathcal{G}_t+\bar e_t\|_{1,2}
\ge
\|\mathcal{G}_t\|_{1,2}-\|\bar e_t\|_{1,2}.
\end{equation}
Hence,
\begin{equation}
\langle \nabla f(w_t),d_t\rangle
\ge
\|\mathcal{G}_t\|_{1,2}-2\|\bar e_t\|_{1,2}.
\end{equation}
Using Lemma~\ref{lem:nora-proj},
\begin{equation}
\|\bar e_t\|_{1,2}\le \|e_t\|_{1,2},
\end{equation}
and the proof is complete.
\end{proof}

\begin{assumption}[Unbiased stochastic gradients]
\label{ass:nora-unbiased}
For all \(t\),
\begin{equation}
    \mathbb{E}[g_t\mid \mathcal{F}_t]=\nabla f(w_t).
\end{equation}
\end{assumption}

\begin{assumption}[Bounded variance]
\label{ass:nora-var}
There exists \(\sigma>0\) such that, for all \(t\),
\begin{equation}
\mathbb{E}\!\left[\|g_t-\nabla f(w_t)\|_F^2\mid \mathcal{F}_t\right]\le \frac{\sigma^2}{B},
\end{equation}
where \(B\) is the batch size.
\end{assumption}

\begin{assumption}[Lower bounded objective]
\label{ass:nora-lower}
The objective is bounded below by \(f^\star\). We write \(\Delta:=f(w_0)-f^\star\).
\end{assumption}

\begin{lemma}[Momentum tracking under Frobenius smoothness]
\label{lem:nora-track-f}
Under Assumptions~\ref{ass:nora-smooth}(a), \ref{ass:nora-unbiased}, and \ref{ass:nora-var},
\begin{equation}
\sum_{t=0}^{T-1}\mathbb{E}\bigl[\|e_t\|_F\bigr]
\le
(T-1)\frac{L_F\eta\sqrt{m}\beta}{1-\beta}
+
T\frac{\sigma}{\sqrt{B}}\sqrt{\frac{1-\beta}{1+\beta}}.
\end{equation}
\end{lemma}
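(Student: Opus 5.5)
We unroll the momentum recursion for the tracking error $e_t=v_{t+1}-\nabla f(w_t)$ and split it into a deterministic drift part and a martingale noise part. Write $\zeta_s:=g_s-\nabla f(w_s)$ for the stochastic gradient noise. Since $v_{t+1}=\beta v_t+(1-\beta)g_t$ and $v_0=0$, we have $v_{t+1}=(1-\beta)\sum_{s=0}^{t}\beta^{t-s}g_s$. Then
\begin{equation}
e_t=\beta\bigl(v_t-\nabla f(w_{t-1})\bigr)+\beta\bigl(\nabla f(w_{t-1})-\nabla f(w_t)\bigr)+(1-\beta)\zeta_t ,
\end{equation}
so that $e_t=\beta e_{t-1}+\beta(\nabla f(w_{t-1})-\nabla f(w_t))+(1-\beta)\zeta_t$ for $t\ge1$, with $e_0=-\beta\nabla f(w_0)+(1-\beta)\zeta_0$. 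The initial term needs care. To match the stated bound, which has no $\|\nabla f(w_0)\|$ term, I would instead use the closed form
\begin{equation}
e_t=\sum_{s=1}^{t}\beta^{t-s+1}\bigl(\nabla f(w_{s-1})-\nabla f(w_s)\bigr)+(1-\beta)\sum_{s=0}^{t}\beta^{t-s}\zeta_s ,
\end{equation}
obtained from $e_t=(1-\beta)\sum_{s}\beta^{t-s}(g_s-\nabla f(w_t))$ by writing $\nabla f(w_s)-\nabla f(w_t)$ as a telescoping sum. This form is exact only if the bias weight $\beta^{t+1}\nabla f(w_t)$ is absorbed. I would therefore check whether the intended convention is $v_0=g_0$, or a bias-corrected EMA, so that the weights $(1-\beta)\beta^{t-s}$ sum to one. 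This bookkeeping is the main obstacle. If the convention really is $v_0=0$, I would flag that a term $\beta^{t+1}\|\nabla f(w_t)\|$ appears, and adopt the normalized-weight convention under which the stated inequality holds.

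For the drift term, Assumption~\ref{ass:nora-smooth}(a) and Lemma~\ref{lem:nora-geom}(ii) give $\|\nabla f(w_{s-1})-\nabla f(w_s)\|_F\le L_F\eta\|d_{s-1}\|_F\le L_F\eta\sqrt{m}$. Summing the geometric weights gives
\begin{equation}
\Bigl\|\sum_{s=1}^{t}\beta^{t-s+1}(\nabla f(w_{s-1})-\nabla f(w_s))\Bigr\|_F\le L_F\eta\sqrt{m}\,\frac{\beta}{1-\beta}.
\end{equation}
This term vanishes for $t=0$, so summing over $t=0,\dots,T-1$ yields the factor $(T-1)$.

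For the noise term, Assumption~\ref{ass:nora-unbiased} makes $\zeta_s$ a martingale difference sequence with respect to $\mathcal{F}_{s+1}$, so cross terms vanish in expectation. Jensen's inequality and Assumption~\ref{ass:nora-var} then give
\begin{equation}
\mathbb{E}\Bigl\|(1-\beta)\sum_{s=0}^{t}\beta^{t-s}\zeta_s\Bigr\|_F\le(1-\beta)\Bigl(\sum_{s}\beta^{2(t-s)}\frac{\sigma^2}{B}\Bigr)^{1/2}\le\frac{\sigma}{\sqrt B}\sqrt{\frac{1-\beta}{1+\beta}},
\end{equation}
using $(1-\beta)^2/(1-\beta^2)=(1-\beta)/(1+\beta)$. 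Combining the two pieces with the triangle inequality, taking expectations, and summing over $t$ gives the lemma.
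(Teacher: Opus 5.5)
Your argument is the paper's argument: the same recursion and unrolled form for $e_t$, the same drift bound via Frobenius smoothness and $\|d_{s-1}\|_F\le\sqrt m$ (empty at $t=0$, hence the factor $T-1$), and the same martingale-plus-Jensen noise bound. You are also right about the initial term, and here the flaw is in the paper rather than in your reasoning. With $v_0=0$ one has $e_0=(1-\beta)\zeta_0-\beta\nabla f(w_0)$, but the paper's proof simply asserts $e_0=(1-\beta)\zeta_0$. The lemma is in fact false as stated. Take $T=1$, $\beta\in(0,1)$, and an exact oracle $g_0=\nabla f(w_0)\neq 0$, which satisfies the variance assumption for every $\sigma>0$. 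Then the left side equals $\beta\|\nabla f(w_0)\|_F$, while the right side $\frac{\sigma}{\sqrt B}\sqrt{\frac{1-\beta}{1+\beta}}$ is arbitrarily small.

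The gap in your proposal is the repair, because neither convention you name yields the stated constants. With $v_0=g_0$ you get $v_1=g_0$, and with bias correction $\hat v_1=v_1/(1-\beta)=g_0$. In both cases $e_0=\zeta_0$, so for $T=1$ and $\|\zeta_0\|_F\equiv\sigma/\sqrt B$ the left side is $\sigma/\sqrt B>\frac{\sigma}{\sqrt B}\sqrt{\frac{1-\beta}{1+\beta}}$. Bias correction is legitimate for Nora: $\mathcal{P}^{r\perp}_{w_t}$ is linear and $\operatorname{RN}$ is invariant under positive rescaling, so $d_t$ is unchanged. But the noise bound then acquires the factor $\sqrt{(1+\beta^{t+1})/(1-\beta^{t+1})}$. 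Your closed form is exact only when $v_0=\nabla f(w_0)$, i.e.\ when the residual weight $\beta^{t+1}$ sits on a noiseless gradient. For general $v_0$, unrolling gives your closed form plus $\beta^{t+1}(v_0-\nabla f(w_0))$. So with $v_0=0$ and the drift coefficients $\beta^{t-s+1}$ you wrote, the residual term sits at $w_0$, not at $w_t$. Keeping the paper's $v_0=0$, the correct lemma adds $\sum_{t=0}^{T-1}\beta^{t+1}\mathbb{E}\|\nabla f(w_0)\|_F\le\frac{\beta}{1-\beta}\mathbb{E}\|\nabla f(w_0)\|_F$ to the right-hand side. Downstream this contributes a term of order $\frac{\beta}{T(1-\beta)}\mathbb{E}\|\nabla f(w_0)\|$ to the convergence bounds. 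That term is $O(T^{-1/2})$ under the tuned choice $1-\beta\asymp T^{-1/2}$, so the $\epsilon^{-4}$ complexity is unaffected.
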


\begin{proof}
Let \(\zeta_t:=g_t-\nabla f(w_t)\). Then
\begin{equation}
\mathbb{E}[\zeta_t\mid \mathcal{F}_t]=0
\end{equation}
by Assumption~\ref{ass:nora-unbiased}, and
\begin{equation}
\mathbb{E}\bigl[\|\zeta_t\|_F^2\mid \mathcal{F}_t\bigr]\le \frac{\sigma^2}{B}
\end{equation}
by Assumption~\ref{ass:nora-var}.

For \(t\ge 1\),
\begin{equation}
\begin{aligned}
e_t
&=
v_{t+1}-\nabla f(w_t)\\
&=
\beta v_t+(1-\beta)g_t-\nabla f(w_t)\\
&=
\beta\bigl(v_t-\nabla f(w_{t-1})\bigr)
+\beta\bigl(\nabla f(w_{t-1})-\nabla f(w_t)\bigr)
+(1-\beta)\zeta_t\\
&=
\beta e_{t-1}
+\beta\bigl(\nabla f(w_{t-1})-\nabla f(w_t)\bigr)
+(1-\beta)\zeta_t,
\end{aligned}
\end{equation}
while \(e_0=(1-\beta)\zeta_0\). Unrolling the recursion gives
\begin{equation}
e_t
=
\sum_{j=0}^{t}\beta^{t-j}(1-\beta)\zeta_j
+
\sum_{j=1}^{t}\beta^{t-j+1}\bigl(\nabla f(w_{j-1})-\nabla f(w_j)\bigr).
\end{equation}
Therefore,
\begin{align}
\|e_t\|_F
&\le
\left\|
\sum_{j=0}^{t}\beta^{t-j}(1-\beta)\zeta_j
\right\|_F
+
\sum_{j=1}^{t}\beta^{t-j+1}\|\nabla f(w_{j-1})-\nabla f(w_j)\|_F.
\end{align}
Using Assumption~\ref{ass:nora-smooth}(a), \(w_j-w_{j-1}=-\eta d_{j-1}\), and Lemma~\ref{lem:nora-geom}(ii),
\begin{equation}
\begin{aligned}
\sum_{j=1}^{t}\beta^{t-j+1}\|\nabla f(w_{j-1})-\nabla f(w_j)\|_F
&\le
\sum_{j=1}^{t}\beta^{t-j+1}L_F\|w_{j-1}-w_j\|_F\\
&=
\sum_{j=1}^{t}\beta^{t-j+1}L_F\eta\|d_{j-1}\|_F\\
&\le
L_F\eta\sqrt{m}\sum_{k=1}^{t}\beta^k\\
&\le
L_F\eta\sqrt{m}\frac{\beta}{1-\beta}.
\end{aligned}
\end{equation}

For the noise term, Jensen's inequality implies
\begin{equation}
\mathbb{E}\left\|
\sum_{j=0}^{t}\beta^{t-j}(1-\beta)\zeta_j
\right\|_F
\le
\sqrt{
\mathbb{E}\left\|
\sum_{j=0}^{t}\beta^{t-j}(1-\beta)\zeta_j
\right\|_F^2 }.
\end{equation}
Since \(\{\zeta_t\}\) is a martingale difference sequence, the cross terms vanish, and thus
\begin{equation}
\begin{aligned}
\mathbb{E}\left\|
\sum_{j=0}^{t}\beta^{t-j}(1-\beta)\zeta_j
\right\|_F^2
&=
\sum_{j=0}^{t}\beta^{2(t-j)}(1-\beta)^2\mathbb{E}\|\zeta_j\|_F^2\\
&\le
\frac{\sigma^2}{B}(1-\beta)^2\sum_{k=0}^{t}\beta^{2k}\\
&\le
\frac{\sigma^2}{B}\frac{1-\beta}{1+\beta}.
\end{aligned}
\end{equation}
Hence,
\begin{equation}
\mathbb{E}\left\|
\sum_{j=0}^{t}\beta^{t-j}(1-\beta)\zeta_j
\right\|_F
\le
\frac{\sigma}{\sqrt{B}}\sqrt{\frac{1-\beta}{1+\beta}}.
\end{equation}
Combining the previous bounds and summing over \(t=0,\ldots,T-1\) yields
\begin{equation}
\sum_{t=0}^{T-1}\mathbb{E}\bigl[\|e_t\|_F\bigr]
\le
(T-1)\frac{L_F\eta\sqrt{m}\beta}{1-\beta}
+
T\frac{\sigma}{\sqrt{B}}\sqrt{\frac{1-\beta}{1+\beta}}.
\end{equation}
\end{proof}

\begin{lemma}[Momentum tracking under matched \((\infty,2)\)-smoothness]
\label{lem:nora-track-m}
Under Assumptions~\ref{ass:nora-smooth}(b), \ref{ass:nora-unbiased}, and \ref{ass:nora-var},
\begin{equation}
\sum_{t=0}^{T-1}\mathbb{E}\bigl[\|e_t\|_{1,2}\bigr]
\le
(T-1)\frac{L_{\infty,2}\eta\beta}{1-\beta}
+
T\frac{\sqrt{m}\sigma}{\sqrt{B}}\sqrt{\frac{1-\beta}{1+\beta}}.
\end{equation}
\end{lemma}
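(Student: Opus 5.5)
We follow the proof of Lemma~\ref{lem:nora-track-f} step for step, changing only the norm in which the two error terms are measured. With \(\zeta_t:=g_t-\nabla f(w_t)\), the same algebra gives \(e_0=(1-\beta)\zeta_0\) and the recursion \(e_t=\beta e_{t-1}+\beta(\nabla f(w_{t-1})-\nabla f(w_t))+(1-\beta)\zeta_t\). Unrolling it yields
\[
e_t=\sum_{j=0}^{t}\beta^{t-j}(1-\beta)\zeta_j+\sum_{j=1}^{t}\beta^{t-j+1}\bigl(\nabla f(w_{j-1})-\nabla f(w_j)\bigr).
\]
The triangle inequality for \(\|\cdot\|_{1,2}\) then splits \(\|e_t\|_{1,2}\) into a noise part and a drift part.

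\textbf{Drift term.} Assumption~\ref{ass:nora-smooth}(b) controls each increment in the \(\|\cdot\|_{1,2}\) norm directly:
\[
\|\nabla f(w_{j-1})-\nabla f(w_j)\|_{1,2}\le L_{\infty,2}\|w_{j-1}-w_j\|_{\infty,2}=L_{\infty,2}\eta\|d_{j-1}\|_{\infty,2}\le L_{\infty,2}\eta,
\]
where the last step uses Lemma~\ref{lem:nora-geom}(iii). Summing the geometric weights gives at most \(L_{\infty,2}\eta\beta/(1-\beta)\). The drift sum is empty at \(t=0\), so summing over \(t=0,\dots,T-1\) produces the factor \((T-1)\). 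This is the point of the matched assumption: no \(\sqrt m\) appears here, unlike the Frobenius case.

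\textbf{Noise term.} This is the only step needing care, because Assumption~\ref{ass:nora-var} is stated in Frobenius norm while the target is \(\|\cdot\|_{1,2}\). We first apply the row-wise Cauchy--Schwarz bound \(\|x\|_{1,2}=\sum_i\|x_{i:}\|_2\le\sqrt m\,\|x\|_F\). This turns the noise part into \(\sqrt m\) times the Frobenius norm of the weighted martingale sum. From there we repeat the earlier argument. Jensen gives \(\mathbb{E}\|\cdot\|_F\le(\mathbb{E}\|\cdot\|_F^2)^{1/2}\). The cross terms vanish by the tower property, since \(\zeta_j\) is \(\mathcal{F}_{j+1}\)-measurable and \(\mathbb{E}[\zeta_k\mid\mathcal{F}_k]=0\) for \(k>j\). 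The diagonal terms are bounded via \(\sum_k\beta^{2k}(1-\beta)^2\le(1-\beta)/(1+\beta)\). The result is \(\frac{\sqrt m\,\sigma}{\sqrt B}\sqrt{\frac{1-\beta}{1+\beta}}\) for each \(t\).

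Taking expectations and summing both bounds over \(t\) gives exactly the stated inequality. The main point to check is that the \(\sqrt m\) cost is paid only on the noise term, through the norm comparison. This matches the placement of \(\sqrt m\) in Theorem~\ref{thm:nora-main}.
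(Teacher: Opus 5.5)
Your proof follows the paper's argument step for step: the same recursion, the same split, matched smoothness with Lemma~\ref{lem:nora-geom}(iii) for the drift, and $\|A\|_{1,2}\le\sqrt m\,\|A\|_F$ plus the martingale computation for the noise. Those two estimates are correct. But your proof also inherits the paper's slip at the base case, and that slip makes the stated inequality false. With $v_0=0$ you get $v_1=(1-\beta)g_0$, hence
\[
e_0=v_1-\nabla f(w_0)=(1-\beta)\zeta_0-\beta\nabla f(w_0)\neq(1-\beta)\zeta_0 .
\]
Your identity would require $v_0=\nabla f(w_0)$. Unrolling from the correct $e_0$ adds a third term $-\beta^{t+1}\nabla f(w_0)$ to $e_t$. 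This is an initialization bias: it is neither noise nor drift, and it is not controlled by $L_{\infty,2}\eta$ or $\sigma$.

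The claim already fails for $T=1$. Since $\mathbb{E}[e_0\mid\mathcal{F}_0]=-\beta\nabla f(w_0)$, conditional Jensen gives $\mathbb{E}\|e_0\|_{1,2}\ge\beta\,\mathbb{E}\|\nabla f(w_0)\|_{1,2}$. The right-hand side, however, is only $\frac{\sqrt m\,\sigma}{\sqrt B}\sqrt{\frac{1-\beta}{1+\beta}}$. For instance, take $f(w)=\langle C,w\rangle$ with exact gradients. Then every $L_{\infty,2}>0$ and every $\sigma>0$ is admissible, so the lemma would force $\beta\|C\|_{1,2}=0$.

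The repair is to carry this term through your triangle inequality. Then $\|e_t\|_{1,2}$ picks up $\beta^{t+1}\|\nabla f(w_0)\|_{1,2}$, and summing over $t$ adds
\[
\frac{\beta(1-\beta^{T})}{1-\beta}\,\mathbb{E}\|\nabla f(w_0)\|_{1,2}\le\frac{\beta}{1-\beta}\,\mathbb{E}\|\nabla f(w_0)\|_{1,2}
\]
to the right-hand side. Lemma~\ref{lem:nora-track-f} needs the same correction in $\|\cdot\|_F$. The rest of what you wrote goes through unchanged, including the fact that $\sqrt m$ enters only through the noise term and the cross-term argument via $\mathcal{F}_{j+1}\subseteq\mathcal{F}_k$.

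Downstream, the correction adds $\frac{2\beta}{T(1-\beta)}\mathbb{E}\|\nabla f(w_0)\|_{1,2}$ to the bound of Theorem~\ref{thm:nora-main}. Under the prescribed choice of $1-\beta$ this is $O(T^{-1/2})$, so the $\epsilon^{-4}$ rate is intact. The lemma itself, however, cannot be proved in the form stated.
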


\begin{proof}
The recursion for \(e_t\) is the same as in Lemma~\ref{lem:nora-track-f}. Hence,
\begin{equation}
e_t
=
\sum_{j=0}^{t}\beta^{t-j}(1-\beta)\zeta_j
+
\sum_{j=1}^{t}\beta^{t-j+1}\bigl(\nabla f(w_{j-1})-\nabla f(w_j)\bigr).
\end{equation}
By the triangle inequality,
\begin{equation}
\|e_t\|_{1,2}
\le
\left\|
\sum_{j=0}^{t}\beta^{t-j}(1-\beta)\zeta_j
\right\|_{1,2}
+
\sum_{j=1}^{t}\beta^{t-j+1}\|\nabla f(w_{j-1})-\nabla f(w_j)\|_{1,2}.
\end{equation}
Using Assumption~\ref{ass:nora-smooth}(b), \(w_j-w_{j-1}=-\eta d_{j-1}\), and Lemma~\ref{lem:nora-geom}(iii),
\begin{equation}
\begin{aligned}
\sum_{j=1}^{t}\beta^{t-j+1}\|\nabla f(w_{j-1})-\nabla f(w_j)\|_{1,2}
&\le
\sum_{j=1}^{t}\beta^{t-j+1}L_{\infty,2}\|w_{j-1}-w_j\|_{\infty,2}\\
&=
\sum_{j=1}^{t}\beta^{t-j+1}L_{\infty,2}\eta\|d_{j-1}\|_{\infty,2}\\
&\le
L_{\infty,2}\eta\sum_{k=1}^{t}\beta^k\\
&\le
L_{\infty,2}\eta\frac{\beta}{1-\beta}.
\end{aligned}
\end{equation}
For the noise term, we use \(\|A\|_{1,2}\le \sqrt{m}\|A\|_F\), hence
\begin{equation}
\mathbb{E}\left\|
\sum_{j=0}^{t}\beta^{t-j}(1-\beta)\zeta_j
\right\|_{1,2}
\le
\sqrt{m}\,
\mathbb{E}\left\|
\sum_{j=0}^{t}\beta^{t-j}(1-\beta)\zeta_j
\right\|_F.
\end{equation}
Applying the Frobenius-noise bound from the proof of Lemma~\ref{lem:nora-track-f},
\begin{equation}
\mathbb{E}\left\|
\sum_{j=0}^{t}\beta^{t-j}(1-\beta)\zeta_j
\right\|_{1,2}
\le
\frac{\sqrt{m}\sigma}{\sqrt{B}}\sqrt{\frac{1-\beta}{1+\beta}}.
\end{equation}
Summing over \(t=0,\ldots,T-1\) proves the lemma.
\end{proof}

\subsection{Proof of Theorem 4.7}

\textbf{Theorem 4.7.} {\itshape
Suppose Assumptions~\ref{ass:nora-smooth}(b), \ref{ass:nora-unbiased}, \ref{ass:nora-var}, and \ref{ass:nora-lower} hold, and Nora uses a constant step size \(\eta_t=\eta\). Then:
\begin{equation}
\frac{1}{T}\sum_{t=0}^{T-1}\mathbb{E}\bigl[\|\mathcal{G}_t\|_{1,2}\bigr]
\le
\frac{\Delta}{T\eta}
+
2\left[
\left(1-\frac{1}{T}\right)\frac{L_{\infty,2}\eta\beta}{1-\beta}
+
\frac{\sqrt{m}\sigma}{\sqrt{B}}
\sqrt{\frac{1-\beta}{1+\beta}}
\right]
+
\frac{L_{\infty,2}\eta}{2}.
\end{equation}
With the parameter choice:
\begin{equation}
\eta=\sqrt{\frac{(1-\beta)\Delta}{L_{\infty,2}T}},
\qquad
1-\beta=
\min\left\{
\frac{\sqrt{L_{\infty,2}\Delta}}{2\sqrt{m}\sigma\sqrt{T}},
\,1
\right\},
\end{equation}
Nora reaches an \(\epsilon\)-stationary point in the projected \(\|\cdot\|_{1,2}\) sense with complexity:
\begin{equation}
    T=\mathcal{O}\!\left(mL_{\infty,2}\sigma^2\Delta\,\epsilon^{-4}\right).
\end{equation}
}

\begin{proof}
Summing Lemma~\ref{lem:nora-descent-m} over \(t=0,\ldots,T-1\), we obtain
\begin{equation}
f(w_0)-f(w_T)
\ge
\eta\sum_{t=0}^{T-1}\langle \nabla f(w_t),d_t\rangle
-
\frac{TL_{\infty,2}\eta^2}{2}.
\end{equation}
By Assumption~\ref{ass:nora-lower},
\begin{equation}
\Delta
\ge
\eta\sum_{t=0}^{T-1}\langle \nabla f(w_t),d_t\rangle
-
\frac{TL_{\infty,2}\eta^2}{2}.
\end{equation}
Applying Lemma~\ref{lem:nora-ip-12},
\begin{equation}
\Delta
\ge
\eta\sum_{t=0}^{T-1}\|\mathcal{G}_t\|_{1,2}
-
2\eta\sum_{t=0}^{T-1}\|e_t\|_{1,2}
-
\frac{TL_{\infty,2}\eta^2}{2}.
\end{equation}
Taking expectations and using Lemma~\ref{lem:nora-track-m},
\begin{equation}
\begin{aligned}
\eta\sum_{t=0}^{T-1}\mathbb{E}\bigl[\|\mathcal{G}_t\|_{1,2}\bigr]
&\le
\Delta
+
2\eta\sum_{t=0}^{T-1}\mathbb{E}\bigl[\|e_t\|_{1,2}\bigr]
+
\frac{TL_{\infty,2}\eta^2}{2}\\
&\le
\Delta
+
2\eta
\left[
(T-1)\frac{L_{\infty,2}\eta\beta}{1-\beta}
+
T\frac{\sqrt{m}\sigma}{\sqrt{B}}\sqrt{\frac{1-\beta}{1+\beta}}
\right]
+
\frac{TL_{\infty,2}\eta^2}{2}.
\end{aligned}
\end{equation}
Dividing both sides by \(T\eta\) yields
\begin{equation}
\frac{1}{T}\sum_{t=0}^{T-1}\mathbb{E}\bigl[\|\mathcal{G}_t\|_{1,2}\bigr]
\le
\frac{\Delta}{T\eta}
+
2\left[
\left(1-\frac{1}{T}\right)\frac{L_{\infty,2}\eta\beta}{1-\beta}
+
\frac{\sqrt{m}\sigma}{\sqrt{B}}\sqrt{\frac{1-\beta}{1+\beta}}
\right]
+
\frac{L_{\infty,2}\eta}{2}.
\end{equation}
This proves the explicit bound.

To derive the rate, set \(q:=1-\beta\).  For \(B=1\), the preceding
bound implies, up to universal constants,
\begin{equation}
R_T
\lesssim
\frac{\Delta}{T\eta}
+
\frac{L_{\infty,2}\eta}{q}
+
\sqrt m\sigma\sqrt q
+
L_{\infty,2}\eta .
\end{equation}
Since \(q\le 1\), the last term is dominated by
\(L_{\infty,2}\eta/q\).  For a fixed \(q\), we balance the descent term
and the momentum-tracking term:
\begin{equation}
\frac{\Delta}{T\eta}
\asymp
\frac{L_{\infty,2}\eta}{q},
\qquad
\text{which gives}
\qquad
\eta=
\sqrt{\frac{q\Delta}{L_{\infty,2}T}} .
\end{equation}
With this choice, the bound reduces to
\begin{equation}
R_T
\lesssim
\sqrt{\frac{L_{\infty,2}\Delta}{qT}}
+
\sqrt m\sigma\sqrt q .
\end{equation}
The first term increases as \(q\) decreases, whereas the stochastic-noise
term decreases.  Balancing them gives
\(q\asymp \sqrt{L_{\infty,2}\Delta}/(\sqrt m\sigma\sqrt T)\).
Thus we take
\begin{equation}
q=1-\beta
=
\min\left\{
\frac{\sqrt{L_{\infty,2}\Delta}}
{2\sqrt m\sigma\sqrt T},
1
\right\},
\qquad
\eta=
\sqrt{\frac{(1-\beta)\Delta}{L_{\infty,2}T}} .
\end{equation}
This is a constructive parameter choice used to obtain the complexity
bound; it is not meant to be a unique tuning rule for practical training.
\end{proof}

\subsection{Proof of Proposition 4.8}

\textbf{Proposition 4.8.} {\itshape
Suppose Assumptions~\ref{ass:nora-smooth}(a), \ref{ass:nora-unbiased}, \ref{ass:nora-var}, and \ref{ass:nora-lower} hold, and Nora uses a constant step size \(\eta_t=\eta\). Then:
\begin{equation}
\frac{1}{T}\sum_{t=0}^{T-1}\mathbb{E}\bigl[\|\mathcal{G}_t\|_F\bigr]
\le
\frac{\Delta}{T\eta}
+
(\sqrt{m}+1)\left[
\left(1-\frac{1}{T}\right)\frac{L_F\eta\sqrt{m}\beta}{1-\beta}
+
\frac{\sigma}{\sqrt{B}}
\sqrt{\frac{1-\beta}{1+\beta}}
\right]
+
\frac{L_F\eta m}{2},
\end{equation}
and similarly,
\begin{equation}
\frac{1}{T}\sum_{t=0}^{T-1}\mathbb{E}\bigl[\|\mathcal{G}_t\|_{1,2}\bigr]
\le
\frac{\Delta}{T\eta}
+
2\left[
\left(1-\frac{1}{T}\right)\frac{L_F\eta m\beta}{1-\beta}
+
\frac{\sqrt{m}\sigma}{\sqrt{B}}
\sqrt{\frac{1-\beta}{1+\beta}}
\right]
+
\frac{L_F\eta m}{2}.
\end{equation}
Consequently, Nora reaches an \(\epsilon\)-stationary point in either projected measure with complexity:
\begin{equation}
T=\mathcal{O}\!\left(m^2L_F\sigma^2\Delta\,\epsilon^{-4}\right).
\end{equation}
}

\begin{proof}
We first prove the Frobenius-norm bound. Summing Lemma~\ref{lem:nora-descent-f} over \(t=0,\ldots,T-1\) gives
\begin{equation}
f(w_0)-f(w_T)
\ge
\eta\sum_{t=0}^{T-1}\langle \nabla f(w_t),d_t\rangle
-
\frac{TL_F\eta^2m}{2}.
\end{equation}
By Assumption~\ref{ass:nora-lower}, \(f(w_T)\ge f^\star\), hence
\begin{equation}
\Delta
\ge
\eta\sum_{t=0}^{T-1}\langle \nabla f(w_t),d_t\rangle
-
\frac{TL_F\eta^2m}{2}.
\end{equation}
Applying Lemma~\ref{lem:nora-ip-f},
\begin{equation}
\Delta
\ge
\eta\sum_{t=0}^{T-1}\|\mathcal{G}_t\|_F
-
\eta(\sqrt{m}+1)\sum_{t=0}^{T-1}\|e_t\|_F
-
\frac{TL_F\eta^2m}{2}.
\end{equation}
Taking expectation and using Lemma~\ref{lem:nora-track-f},
\begin{equation}
\begin{aligned}
\eta\sum_{t=0}^{T-1}\mathbb{E}\bigl[\|\mathcal{G}_t\|_F\bigr]
&\le
\Delta
+
\eta(\sqrt{m}+1)\sum_{t=0}^{T-1}\mathbb{E}\bigl[\|e_t\|_F\bigr]
+
\frac{TL_F\eta^2m}{2}\\
&\le
\Delta
+
\eta(\sqrt{m}+1)
\left[
(T-1)\frac{L_F\eta\sqrt{m}\beta}{1-\beta}
+
T\frac{\sigma}{\sqrt{B}}\sqrt{\frac{1-\beta}{1+\beta}}
\right]
+
\frac{TL_F\eta^2m}{2}.
\end{aligned}
\end{equation}
Dividing both sides by \(T\eta\) yields
\begin{equation}
\frac{1}{T}\sum_{t=0}^{T-1}\mathbb{E}\bigl[\|\mathcal{G}_t\|_F\bigr]
\le
\frac{\Delta}{T\eta}
+
(\sqrt{m}+1)\left[
\left(1-\frac{1}{T}\right)\frac{L_F\eta\sqrt{m}\beta}{1-\beta}
+
\frac{\sigma}{\sqrt{B}}\sqrt{\frac{1-\beta}{1+\beta}}
\right]
+
\frac{L_F\eta m}{2}.
\end{equation}

We next prove the \(\|\cdot\|_{1,2}\) bound under the same Frobenius smoothness assumption. Starting again from Lemma~\ref{lem:nora-descent-f},
\begin{equation}
\Delta
\ge
\eta\sum_{t=0}^{T-1}\langle \nabla f(w_t),d_t\rangle
-
\frac{TL_F\eta^2m}{2}.
\end{equation}
Using Lemma~\ref{lem:nora-ip-12},
\begin{equation}
\Delta
\ge
\eta\sum_{t=0}^{T-1}\|\mathcal{G}_t\|_{1,2}
-
2\eta\sum_{t=0}^{T-1}\|e_t\|_{1,2}
-
\frac{TL_F\eta^2m}{2}.
\end{equation}
Since \(\|A\|_{1,2}\le \sqrt{m}\|A\|_F\), Lemma~\ref{lem:nora-track-f} implies
\begin{equation}
\sum_{t=0}^{T-1}\mathbb{E}\bigl[\|e_t\|_{1,2}\bigr]
\le
\sqrt{m}\sum_{t=0}^{T-1}\mathbb{E}\bigl[\|e_t\|_F\bigr]
\le
(T-1)\frac{L_F\eta m\beta}{1-\beta}
+
T\frac{\sqrt{m}\sigma}{\sqrt{B}}\sqrt{\frac{1-\beta}{1+\beta}}.
\end{equation}
Substituting the above bound and dividing by \(T\eta\) gives
\begin{equation}
\frac{1}{T}\sum_{t=0}^{T-1}\mathbb{E}\bigl[\|\mathcal{G}_t\|_{1,2}\bigr]
\le
\frac{\Delta}{T\eta}
+
2\left[
\left(1-\frac{1}{T}\right)\frac{L_F\eta m\beta}{1-\beta}
+
\frac{\sqrt{m}\sigma}{\sqrt{B}}\sqrt{\frac{1-\beta}{1+\beta}}
\right]
+
\frac{L_F\eta m}{2}.
\end{equation}

It remains to justify the stated complexity. Take \(B=1\) and choose:
\begin{equation}
\eta=\sqrt{\frac{(1-\beta)\Delta}{L_FmT}},
\qquad
1-\beta=
\min\left\{
\frac{\sqrt{L_F\Delta}}{2\sigma\sqrt{T}},
\,1
\right\}.
\end{equation}
Substituting this choice into either of the two preceding bounds gives
\begin{equation}
\frac{1}{T}\sum_{t=0}^{T-1}\mathbb{E}\bigl[\|\mathcal{G}_t\|_F\bigr]
=
\mathcal{O}\!\left(
\sqrt[4]{\frac{m^2L_F\sigma^2\Delta}{T}}
+
\sqrt{\frac{mL_F\Delta}{T}}
\right),
\end{equation}
and similarly,
\begin{equation}
\frac{1}{T}\sum_{t=0}^{T-1}\mathbb{E}\bigl[\|\mathcal{G}_t\|_{1,2}\bigr]
=
\mathcal{O}\!\left(
\sqrt[4]{\frac{m^2L_F\sigma^2\Delta}{T}}
+
\sqrt{\frac{mL_F\Delta}{T}}
\right).
\end{equation}
Thus Nora reaches an \(\epsilon\)-stationary point in either projected measure with iteration complexity:
\begin{equation}
T=\mathcal{O}\!\left(m^2L_F\sigma^2\Delta\,\epsilon^{-4}\right).
\end{equation}
\end{proof}

\subsection{Proof of Corollary 4.9}

\textbf{Corollary 4.9.} {\itshape
Assume, in addition, that \(f\) is row-wise scale invariant, namely:
\begin{equation}
f(Dw)=f(w),\qquad \forall w\in\mathbb{R}^{m\times n},\ \forall D\succ 0\ \text{diagonal}.
\end{equation}
Then \(\nabla f(w_t)\) is row-wise perpendicular to \(w_t\) for all \(t\), i.e.,
\begin{equation}
\mathcal{P}^{r\perp}_{w_t}\bigl(\nabla f(w_t)\bigr)=\nabla f(w_t).
\end{equation}
Therefore, Theorem~\ref{thm:nora-main} and Proposition~\ref{prop:nora-frob} hold with \(\mathcal{G}_t\) replaced by \(\nabla f(w_t)\).
}

\begin{proof}
Fix any row index \(i\), and define the diagonal matrix
\begin{equation}
D_i(c):=I+(c-1)e_ie_i^\top,\qquad c>0.
\end{equation}
By row-wise scale invariance,
\begin{equation}
f(D_i(c)w)=f(w),\qquad \forall c>0.
\end{equation}
Differentiating both sides with respect to \(c\) at \(c=1\), we obtain
\begin{equation}
0
=
\left.\frac{d}{dc}f(D_i(c)w)\right|_{c=1}
=
\left\langle \nabla f(w)_{i:},\, w_{i:}\right\rangle.
\end{equation}
Since \(i\) is arbitrary, every row of \(\nabla f(w)\) is orthogonal to the corresponding row of \(w\). Hence
\begin{equation}
\mathcal{P}^{r\perp}_{w}\bigl(\nabla f(w)\bigr)=\nabla f(w).
\end{equation}
Applying this identity to every iterate \(w_t\) shows that
\begin{equation}
\mathcal{G}_t=\nabla f(w_t),\qquad \forall t.
\end{equation}
Therefore, the statements of Theorem~\ref{thm:nora-main} and Proposition~\ref{prop:nora-frob} hold with \(\mathcal{G}_t\) replaced by \(\nabla f(w_t)\).
\end{proof}

\newpage
\section{Additional Experimental Details}
\label{app:exp_details}

\subsection{Model and Training Configurations}

Table~\ref{tab:app_training_config} reports the detailed model and training configurations used in the 60M and 135M experiments. Both settings use context length 256, global batch size 512, cosine learning-rate decay, bf16 precision, gradient clipping at 1.0, evaluation every 1,000 steps, and checkpointing every 5,000 steps.

\begin{table*}[t]
\centering
\caption{Detailed model and training configurations.}
\label{tab:app_training_config}
\resizebox{0.7\linewidth}{!}{
\begin{tabular}{ccc}
\toprule
Configuration & 60M & 135M \\
\midrule
Layers & 8 & 12 \\
Hidden size & 512 & 768 \\
Attention heads & 8 & 12 \\
MLP size & 1376 & 2048 \\
Maximum context length & 256 & 256 \\
Training steps & 10,000 & 20,000 \\
GPUs & 2 & 4 \\
Per-GPU batch size & 64 & 64 \\
Gradient accumulation & 4 & 2 \\
Global batch size & 512 & 512 \\
Nominal token budget & 1.31B & 2.62B \\
Base learning rate & $1\times 10^{-3}$ & $1\times 10^{-3}$ \\
Auxiliary Adam learning rate & $5\times 10^{-3}$ & $3\times 10^{-3}$ \\
Warmup steps & 1,000 & 2,000 \\
Learning-rate schedule & cosine & cosine \\
Gradient clipping & 1.0 & 1.0 \\
Precision & bf16 & bf16 \\
Evaluation frequency & every 1,000 steps & every 1,000 steps \\
Checkpoint frequency & every 5,000 steps & every 5,000 steps \\
\bottomrule
\end{tabular}
}
\end{table*}

\subsection{Matrix Learning-rate Sweep Grids}

Table~\ref{tab:app_matrix_lr_sweep} lists the matrix learning-rate sweep grids for all optimizers. Nora uses the same matrix learning-rate grid as Mano and RMNP, but differs in weight decay: Nora uses weight decay $0$, while Muon, Mano, and RMNP use weight decay $0.1$.

\begin{table*}[t]
\centering
\caption{Matrix learning-rate sweep grids.}
\label{tab:app_matrix_lr_sweep}
\resizebox{0.7\linewidth}{!}{
\begin{tabular}{cccc}
\toprule
Model size & Optimizer & Weight decay & Matrix learning-rate grid \\
\midrule
60M & Muon & 0.1 & $\{0.005, 0.01, 0.02, 0.03, 0.04\}$ \\
60M & Mano & 0.1 & $\{0.001, 0.004, 0.005, 0.01, 0.02\}$ \\
60M & RMNP & 0.1 & $\{0.001, 0.004, 0.005, 0.01, 0.02\}$ \\
60M & Nora & 0.0 & $\{0.001, 0.004, 0.005, 0.01, 0.02\}$ \\
\midrule
135M & Muon & 0.1 & $\{0.005, 0.01, 0.02, 0.03\}$ \\
135M & Mano & 0.1 & $\{0.003, 0.005, 0.01, 0.02\}$ \\
135M & RMNP & 0.1 & $\{0.003, 0.005, 0.01, 0.02\}$ \\
135M & Nora & 0.0 & $\{0.003, 0.005, 0.01, 0.02\}$ \\
\bottomrule
\end{tabular}
}
\end{table*}
\subsection{Other Experimental Results}
In Tables \ref{t7}, \ref{t8}, \ref{t9}, and \ref{t10}, we present the perplexity and loss results across 60M and 135M model scales, evaluated under various algorithms and learning rate configurations. As illustrated, Nora demonstrates a significant advantage over all compared baselines across these diverse settings.
\begin{table}[htbp]
\centering
\caption{Perplexity results of all baselines under different learning rates. (60M)}
\begin{tabular}{c|c|c|c|c}
\toprule
Perplexity & 0.001 & 0.004 & 0.01  & 0.02  \\
\midrule
Mano       & 34.09 & 29.55 & 31.53 & 34.39 \\
\midrule
RMNP       & 35.42 & 30.12 & 30.30 & 93.26 \\
\midrule
Muon       & 41.65 & 31.44 & 31.09 & 31.99 \\
\midrule
Nora       & 31.24 & 28.94 & 29.89 & 31.17 \\
\bottomrule
\end{tabular}
\label{t7}
\end{table}

\begin{table}[h]
\centering
\caption{Loss results of all baselines under different learning rates. (60M)}
\begin{tabular}{c|c|c|c|c}
\toprule
Loss & 0.001 & 0.004 & 0.01  & 0.02  \\
\midrule
Mano & 3.529 & 3.386 & 3.451 & 3.538 \\
\midrule
RMNP & 3.567 & 3.405 & 3.411 & 4.535 \\
\midrule
Muon & 3.729 & 3.448 & 3.437 & 3.466 \\
\midrule
Nora & 3.442 & 3.365 & 3.398 & 3.440 \\
\bottomrule
\end{tabular}
\label{t8}
\end{table}

\begin{table}[h]
\centering
\caption{Perplexity results of all baselines under different learning rates. (135M)}
\begin{tabular}{c|c|c|c|c}
\toprule
Matrix LR & 0.003 & 0.005 & 0.01 & 0.02 \\
\midrule
Mano & 22.13 & 22.14 & 23.34 & 25.07 \\
Nora & 21.74 & 21.86 & 22.43 & 23.39 \\
\midrule
Matrix LR & 0.005 & 0.01 & 0.02 & 0.03 \\
\midrule
RMNP & 22.62 & 22.46 & 22.64 & 22.67 \\
Muon & 23.40 & 23.17 & 24.06 & 23.23 \\
\bottomrule
\end{tabular}
\label{t9}
\end{table}

\begin{table}[h]
\centering
\caption{Loss results of all baselines under different learning rates. (135M)}
\begin{tabular}{c|c|c|c|c}
\toprule
Matrix LR & 0.003 & 0.005 & 0.01 & 0.02 \\
\midrule
Mano & 3.097 & 3.098 & 3.150 & 3.222 \\
Nora & 3.079 & 3.085 & 3.111 & 3.152 \\
\midrule
Matrix LR & 0.005 & 0.01 & 0.02 & 0.03 \\
\midrule
RMNP & 3.119 & 3.112 & 3.120 & 3.121 \\
Muon & 3.153 & 3.143 & 3.180 & 3.146 \\
\bottomrule
\end{tabular}
\label{t10}
\end{table}

\clearpage
\section{Running and Reference Code}
\label{code}
\subsection{Run Quickly}
We provide a dedicated repository containing the Nora\footnote{https://github.com/Yuan-Jinghui/Nora} source code, along with detailed rationales and instructions for replacing Adam with Nora. Additionally, a separate reproduction repository\footnote{https://github.com/JiaxuanZou0714/Lrp} is made available to facilitate the rapid replication of all experimental results presented in this paper.

Please set up the C4 \cite{c4} dataset yourself and download the reproduction repository, then run:
\begin{lstlisting}[language=Python]
SCRIPT_DIR="$(cd "$(dirname "${BASH_SOURCE[0]}")" && pwd)"

exec "$SCRIPT_DIR/train_universal.sh" \
    --model_size 135m \
    --optimizer nora \
    --num_gpus 4 \
    --lr_matrix 0.005 \
    --lr_adam 0.02 \
    --num_steps 20000 \
    --batch_size 64 \
    --total_batch_size 512 \
    --warmup_steps 2000 \
    --weight_decay 0.0 \
    --save_every 10000 \
    --eval_every 1000 \
    "$@"
\end{lstlisting}
It should be noted that the results for Muon, RMNP, and Mano can be reproduced using the exact same experimental setup as Nora.

\subsection{Reference Code}
\begin{lstlisting}[language=Python]
import math

import torch
import torch.nn.functional as F


LOW_PRECISION_DTYPES = (torch.float16, torch.bfloat16)


class Nora(torch.optim.Optimizer):
    """Normalized Orthogonal Row Alignment optimizer for scalable matrix training."""

    def __init__(
        self,
        param_groups,
        lr_nora=0.005,
        lr_adam=0.001,
        momentum=0.95,
        beta=0.95,
        weight_decay=0.0,
        betas=(0.9, 0.95),
        eps=1e-10,
    ):
        defaults = dict(
            lr_nora=lr_nora,
            lr_adam=lr_adam,
            momentum=momentum,
            beta=beta,
            weight_decay=weight_decay,
            betas=betas,
            eps=eps,
        )
        super().__init__(param_groups, defaults)

    def step(self, closure=None):
        loss = None
        if closure is not None:
            loss = closure()

        for group in self.param_groups:
            lr = group["lr"]
            momentum = group.get("momentum", 0.95)
            beta = group.get("beta", 0.95)
            weight_decay = group.get("weight_decay", 0.0)
            betas = group.get("betas", (0.9, 0.95))
            eps = group.get("eps", 1e-10)
            is_nora = group.get("is_nora", True)

            for p in group["params"]:
                if p.grad is None:
                    continue

                grad = p.grad.data
                param_state = self.state.setdefault(p, {})

                use_master_param = p.data.dtype in LOW_PRECISION_DTYPES
                if use_master_param:
                    if "fp32_param" not in param_state:
                        param_state["fp32_param"] = p.data.detach().float().clone()
                    elif param_state["fp32_param"].dtype != torch.float32:
                        param_state["fp32_param"] = param_state["fp32_param"].float()
                    param_data = param_state["fp32_param"]
                    grad_data = grad.float()
                else:
                    param_data = p.data
                    grad_data = grad

                if is_nora and grad.dim() >= 2:
                    if "momentum_buffer" not in param_state:
                        buf = torch.zeros_like(grad_data)
                    else:
                        buf = param_state["momentum_buffer"]
                        if use_master_param and buf.dtype != torch.float32:
                            buf = buf.float()

                    buf.lerp_(grad_data, 1 - beta)
                    m_t = grad_data.lerp(buf, momentum)

                    theta_hat = F.normalize(param_data, p=2, dim=-1, eps=eps)

                    dot_product = torch.sum(m_t * theta_hat, dim=-1, keepdim=True)
                    v = m_t - dot_product * theta_hat

                    v_hat = F.normalize(v, p=2, dim=-1, eps=eps)

                    scale = max(1, math.sqrt(grad_data.size(-2) / grad_data.size(-1)))
                    update_direction = v_hat * scale

                    if weight_decay > 0:
                        param_data.mul_(1 - lr * weight_decay)

                    param_data.add_(update_direction, alpha=-lr)

                    if use_master_param:
                        p.data.copy_(param_data.to(dtype=p.data.dtype))

                    param_state["momentum_buffer"] = buf

                else:
                    if "exp_avg" not in param_state:
                        param_state["exp_avg"] = torch.zeros_like(grad_data)
                        param_state["exp_avg_sq"] = torch.zeros_like(grad_data)
                        param_state["step"] = 0
                    elif use_master_param and param_state["exp_avg"].dtype != torch.float32:
                        param_state["exp_avg"] = param_state["exp_avg"].float()
                        param_state["exp_avg_sq"] = param_state["exp_avg_sq"].float()

                    exp_avg, exp_avg_sq = param_state["exp_avg"], param_state["exp_avg_sq"]
                    param_state["step"] += 1

                    exp_avg.mul_(betas[0]).add_(grad_data, alpha=1 - betas[0])
                    exp_avg_sq.mul_(betas[1]).addcmul_(grad_data, grad_data, value=1 - betas[1])

                    bias_correction1 = 1 - betas[0] ** param_state["step"]
                    bias_correction2 = 1 - betas[1] ** param_state["step"]
                    step_size = lr * math.sqrt(bias_correction2) / bias_correction1

                    denom = exp_avg_sq.sqrt().add_(eps)
                    adam_update = exp_avg / denom

                    if weight_decay > 0:
                        param_data.mul_(1 - step_size * weight_decay)

                    param_data.add_(adam_update, alpha=-step_size)

                    if use_master_param:
                        p.data.copy_(param_data.to(dtype=p.data.dtype))

        return loss


def get_nora_optimizer(
    model,
    lr_nora=0.005,
    lr_adam=0.001,
    weight_decay=0.1,
    momentum=0.95,
    beta=0.95,
):
    nora_params = []
    adam_params = []

    for name, param in model.named_parameters():
        if param.requires_grad:
            if param.ndim >= 2 and "embed" not in name and "lm_head" not in name:
                nora_params.append(param)
            else:
                adam_params.append(param)

    param_groups = [
        dict(
            params=nora_params,
            lr=lr_nora,
            lr_nora=lr_nora,
            lr_adam=lr_adam,
            weight_decay=weight_decay,
            momentum=momentum,
            beta=beta,
            is_nora=True,
        ),
        dict(
            params=adam_params,
            lr=lr_adam,
            lr_nora=lr_nora,
            lr_adam=lr_adam,
            weight_decay=weight_decay,
            momentum=momentum,
            beta=beta,
            is_nora=False,
        ),
    ]
    optimizer = Nora(param_groups)
    return optimizer
\end{lstlisting}

\end{document}